\documentclass{article}

\PassOptionsToPackage{authoryear,round}{natbib}

\usepackage[preprint]{neurips_2026}
\usepackage[utf8]{inputenc} % allow utf-8 input
\usepackage[T1]{fontenc}    % use 8-bit T1 fonts
\usepackage{microtype}
\usepackage{graphicx}
\graphicspath{{./}{./figs/}}
\usepackage{subcaption}
\usepackage{booktabs} % for professional tables
\usepackage{multirow} % for multirow cells in tables
\usepackage{nicefrac} % compact symbols for 1/2, etc.

\usepackage[draft]{hyperref}
\hypersetup{hidelinks}
\usepackage{url}

\usepackage{amsmath}
\usepackage{amssymb}
\usepackage{mathtools}
\usepackage{algorithm}
\usepackage{algorithmic}
\usepackage{amsthm}
\usepackage{booktabs}
\usepackage{multirow}
\usepackage{makecell}
\usepackage{siunitx}
\usepackage{tabularx}
\usepackage{adjustbox}
\usepackage{colortbl}
\usepackage{rotating}
\usepackage[table]{xcolor}
\usepackage{caption}
\usepackage{placeins}
\usepackage{float}
\usepackage{pifont}
\newcommand{\cmark}{\ding{51}}%
\newcommand{\xmark}{\ding{55}}%

\definecolor{linkblue}{RGB}{46, 116, 181}
\definecolor{cherryred}{RGB}{196, 36, 70}

\hypersetup{
  colorlinks=true,
  citecolor=linkblue,
  urlcolor=cherryred,
  linkcolor=cherryred
}

\newcommand{\ours}{\textsc{CompilerKV}}
\newcommand{\itbf}[1]{\textit{\textbf{#1}}}

\usepackage[capitalize,noabbrev]{cleveref}

\theoremstyle{plain}
\newtheorem{theorem}{Theorem}[section]
\newtheorem{proposition}[theorem]{Proposition}

\theoremstyle{definition}

\theoremstyle{remark}

\usepackage[disable,textsize=tiny]{todonotes}

\title{\textsc{CompilerKV}: Risk-Adaptive KV Cache Compression via Offline Experience Compilation}

\author{%
  Ning Yang$^{1}$\thanks{Corresponding author: \texttt{ning.yang@ia.ac.cn}.}\quad
  Chengzhi Wang$^{2}$\thanks{Core contributor.}\quad
  Yibo Liu$^{3}$\quad
  Baoliang Tian$^{4}$\quad
  Haijun Zhang$^{5}$\\[0.6em]
  {\normalfont\small $^{1}$ Institute of Automation, Chinese Academy of Sciences, Beijing, China}\\
  {\normalfont\small $^{2}$ University of Electronic Science and Technology of China, Chengdu, China}\\
  {\normalfont\small $^{3}$ The Chinese University of Hong Kong, Shenzhen, China}\\
  {\normalfont\small $^{4}$ Tianjin university, Tianjin, China}\\
  {\normalfont\small $^{5}$ University of Science and Technology Beijing, Beijing, China}
}

\begin{document}

\maketitle
% Keep the official NeurIPS style parameters unchanged.

\begin{abstract}
Prefill-only KV compression freezes a token subset at the end of prefill and decodes from it without further eviction. The retention decision is therefore irreversible, yet existing methods estimate the corrective signals it relies on, per-head reliability and prompt-level compression sensitivity, online from a single noisy prompt. We argue this is the wrong statistical unit: these signals exhibit far higher cross-prompt regularity than within-prompt signal-to-noise. We introduce \textsc{CompilerKV}, a KV-retention policy whose corrective tables are compiled offline from a calibration corpus, reducing online correction after the standard observation-window scan to $O(1)$ lookups plus a budget clamp. We find that compiled retention tables behave as portable architectural priors: rankings transfer across disjoint corpora on four backbones (mean Spearman $\bar\rho{=}0.90$), and direct model-to-model table transfer costs only $0.4$--$0.8$ LongBench points on average. At a 512-token budget, \textsc{CompilerKV} attains compressed-SOTA on all four backbones, improving over the strongest prefill-only baseline by $+1.67$ points on average (task-bootstrap 95\% CI $[+1.08,+2.37]$). Pressure regimes amplify the gap: under a fixed $512/32k$ cache ratio, \textsc{CompilerKV} remains the strongest compressed method through 128k RULER ($\sim\!73$ vs.\ FullKV $\sim\!79$, SnapKV $\sim\!38$); on 32k NIAH it reaches $0.89$ vs.\ SnapKV $0.42$; and at 32k input, retaining only $1.56\%$ of the prefill KV, batch-16 serving remains feasible where FullKV is OOM.
\end{abstract}

\section{Introduction}

Large language models (LLMs) are bottlenecked by the linear growth of the Key--Value (KV) cache: a 7B model at 128k context already exceeds 100\,GB~\citep{pope2023efficiently, xiao2024duoattention, behrouz2024titans}. \emph{Prefill-only} KV compression, which selects a token subset once at the end of prefill and freezes it for the entire decoding pass, has emerged as a deployment-friendly recipe~\citep{li2024snapkv,cai2024pyramidkv}: it converts the runtime memory wall into a single fixed cost with zero decoding-time intervention. This regime exposes a \emph{statistical mismatch}, also suggested by prior observations of retrieval-head specialization and long-context failure modes~\citep{fu2024not,hsieh2024ruler}. The signals needed to fix an irreversible retention decision live at the cross-prompt scale, while prior online selectors estimate them at the within-prompt scale. We make this mismatch explicit and show that fixing it, without changing what is selected but only \emph{where the policy is computed}, closes most of the gap to FullKV at a fixed 512-token budget.

\textbf{The hard core of the problem.} The defining constraint of prefill-only compression is that the retention decision is \emph{irreversible}: the signals that ultimately reveal which tokens matter, namely per-head retrieval roles and prompt-level compression sensitivity, only emerge during decoding, but no token can be brought back. A one-shot decision under a tight budget therefore leaves no room for error correction, and any selection error becomes permanent information loss.

\textbf{Three error sources, jointly.} We identify three structural error sources that any one-shot decision must control simultaneously, since failing on any single one is enough to trigger tail failure:
\begin{itemize}\setlength\itemsep{0.15em}
    \item \textbf{(E1) Single-prompt statistical noise.} Per-prompt attention is heavy-tailed and scale-biased; large magnitudes do not imply semantic importance, and small estimation errors shift the selection boundary irreversibly.
    \item \textbf{(E2) Head heterogeneity.} Attention heads are functionally specialized, with retrieval-critical heads coexisting with consistently noisy ones~\citep{voita2019analyzing, fu2024not}, so naïvely aggregating across heads lets noisy heads dominate the score.
    \item \textbf{(E3) Prompt-level risk variance.} Different prompts compress to different limits; a one-size-fits-all retention rule under-allocates to high-entropy or high-PPL prompts and triggers catastrophic loss precisely on the hardest examples.
\end{itemize}
Representative prefill-only compressors---SnapKV~\citep{li2024snapkv}, PyramidKV~\citep{cai2024pyramidkv}, DynamicKV~\citep{zhou2024dynamickv}, AdaKV~\citep{feng2024ada}, HeadKV~\citep{fu2024not}, CAKE~\citep{qin2024cake}, and ChunkKV~\citep{liu2025chunkkv}---chase E1--E3 with the same recipe: re-estimate the corrective signals \emph{online}, on the single current prompt, from instantaneous attention statistics. The granularities differ (token, chunk, layer, head, task), but the statistical unit is the same, namely \emph{one prompt}, and Figure~\ref{fig:e1e2e3_diagnostic} shows that this is the wrong unit for the correction. Per-prompt attention is heavy-tailed (E1) and has high cross-head variance (E2), but the \emph{ranking} of which heads are reliable, and the \emph{mapping} from risk features to a safe threshold (E3), are stable at the cross-prompt scale. Single-prompt estimators cannot recover them no matter how cleverly the budget is granularized. We call the resulting system \textsc{CompilerKV}: it compiles stable model-level priors into a deterministic prefill-retention policy before inference, leaving only the prompt-specific residual (token utility and a risk-bin lookup) to the online pass.

\begin{figure}[!t]
\centering
\includegraphics[width=0.92\textwidth]{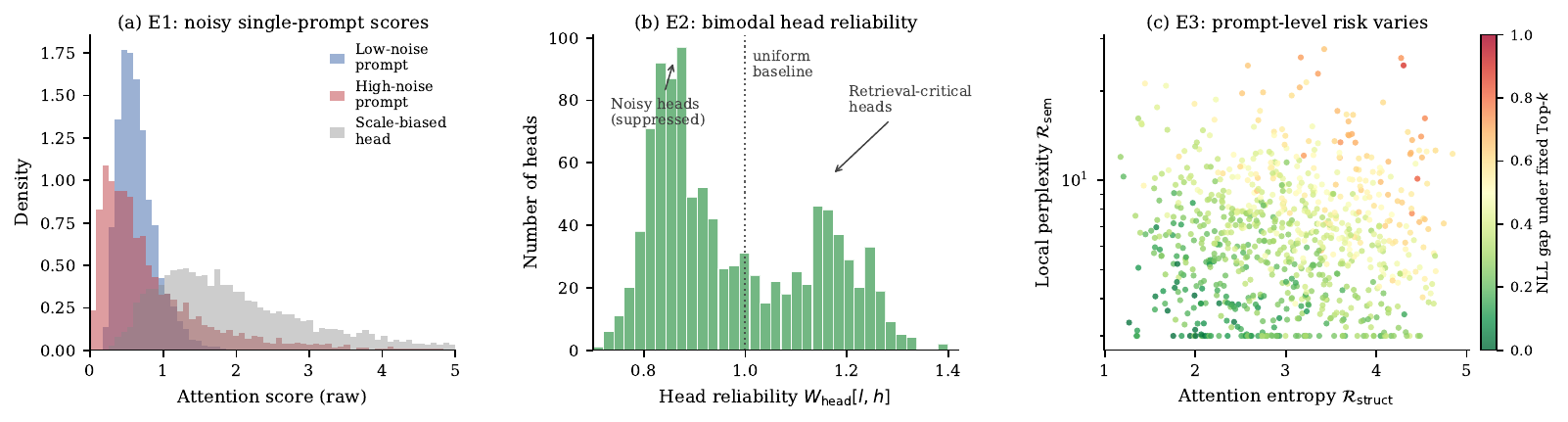}
\caption{\textbf{Empirical evidence that E1--E3 are real, distinct failure modes on calibration data.}
\emph{(a)} The same token's attention score has very different distributions across prompts (low-noise vs.\ high-noise prompt) and across heads (a scale-biased head spans a much wider range), confirming E1.
\emph{(b)} The aggregated distribution of compiled head-reliability weights is bimodal: a substantial cluster of heads is consistently down-weighted (noise emitters), while a smaller cluster receives weights $\!>\!1.1$ (retrieval-critical), confirming E2.
\emph{(c)} The compression damage of a fixed Top-$k$ rule (color: NLL gap to FullKV) varies systematically with attention entropy and local perplexity, confirming E3 as a prompt-level effect that no global $k$ can absorb.}
\label{fig:e1e2e3_diagnostic}
\end{figure}

\textbf{Operational view.}
Online heuristics re-estimate importance from each prompt's own attention statistics.
DuoAttention-style offline head typing makes a one-time binary assignment but has no
prompt-risk control at inference. \textsc{CompilerKV} answers a stricter question:
given compiled priors on head reliability and prompt risk, what threshold makes the
one-shot retention decision safe?

\textbf{Our reframing: the corrective signals are stable architectural priors.} Functional specialization of attention heads is largely fixed at pre-training time~\citep{voita2019analyzing}; retrieval-head studies further show stable head roles across inputs~\citep{fu2024not}. We confirm empirically that head-reliability rankings transfer across disjoint calibration corpora \emph{for every evaluated backbone}: LLaMA-3, Mistral, Qwen2, and InternLM obtain aggregate Spearman correlations in the $0.89$--$0.91$ range, with mean $\bar{\rho}=0.90$ between arXiv$+$PubMed and ShareGPT$+$UltraChat (\S\ref{sec:cross_domain}). Likewise, the mapping from prompt-level risk (attention entropy, local perplexity) to a safe retention threshold is a property of the \emph{calibration distribution}, not any individual prompt. This stability is the missing piece: it means E1--E3 can be solved \emph{once, offline}, on a held-out calibration corpus, and the result reused for every future prompt via $O(1)$ table lookups.

\textbf{Compile once, deploy across models.} This turns head reliability from a tuning trick into a reusable scientific object. Appendix~\ref{sec:model_transfer} reports direct model-to-model table transfer: same-family reuse costs $0.3$--$0.4$ LongBench points, while broader source-to-target transfer costs only $0.4$--$0.8$ points relative to target-model compilation. This portability is where continuous reliability weights matter: unlike binary retrieval/streaming head typing, they preserve a graded ordering that can be depth-aligned and reused across model variants.

\textbf{\textsc{CompilerKV} as a compiled retention policy.} We instantiate this idea as an offline-compiled KV-retention policy whose components map one-to-one to E1--E3 by construction. The novelty is not assigning heads different weights, which several prior methods also do; it is the \emph{coupled head--risk compilation}: continuous head reliability changes the utility distribution, while the risk gate chooses a safe threshold for that distribution under a hard budget.
\begin{itemize}\setlength\itemsep{0.15em}
    \item \emph{Stabilized utility} (E1): a window-aggregated, head-relative score that removes the two largest sources of online noise, transient attention spikes and inter-head scale bias, without any learning.
    \item \emph{Head Heterogeneity Table} (E2): per-head reliability weights compiled offline via conservative Q-learning (CQL), giving reliable heads ``veto power'' so that a single retrieval-critical head can save a token from noisy-head consensus.
    \item \emph{Risk-Adaptive Threshold Gate} (E3): a small lookup table mapping (entropy-bin, PPL-bin) to a retention threshold, compiled with the same offline CQL estimator. Conservative regularization explicitly biases the table toward safer retention in rare high-risk states, the exact regime where online heuristics fail.
\end{itemize}
The three components are not independent additions: they form one irreversible-decision policy. The runtime tables are factorized for efficiency, but they are calibrated against the same downstream compressed-cache reward and final retention operator. Proposition~\ref{thm:bound} provides a formal anchor: deterministic approximation error is controlled by discarded tail mass, while finite-table calibration error is controlled at the prompt level rather than by assuming independent attention rows.

\section{Related Work}

\textbf{Prefill-only KV cache selection.} The dominant approach to prefill-only compression selects a token subset at the end of prefill based on instantaneous attention statistics and freezes the result for all decoding steps. H2O~\citep{zhang2023h2o} retains tokens with the highest accumulated attention scores; SnapKV~\citep{li2024snapkv} uses a query observation window at the prompt tail; VATP~\citep{guo2024attention} adds value-norm magnitude; ChunkKV~\citep{liu2025chunkkv} extends selection to contiguous semantic chunks. Layer-level methods (PyramidKV~\citep{cai2024pyramidkv}) assign larger budgets to lower layers; head-level methods (AdaKV~\citep{feng2024ada}, HeadKV~\citep{fu2024not}, CAKE~\citep{qin2024cake}, ZigZagKV~\citep{zhong2024zigzagkv}) further redistribute budget across attention heads per prompt; task-conditioned methods (DynamicKV~\citep{zhou2024dynamickv}) shape the budget based on inferred task type. Despite varying granularity, all share the same signal source: online, single-prompt attention statistics, which Figure~\ref{fig:e1e2e3_diagnostic} diagnoses as structurally mismatched to the three error sources (E1--E3) that determine prefill compression quality. \textsc{CompilerKV} is the first method in this family to compile the KV-retention policy \emph{entirely} offline, replacing per-prompt estimation with $O(1)$ table lookups.

\textbf{Decode-time eviction and sparse attention.} A complementary line of work~\citep{liu2023deja, xiao2023efficient, tang2024quest, lv2025kvpruner} retains the ability to evict tokens or reweight attention \emph{during} decoding, using generation-time feedback to correct earlier decisions. These methods tolerate online noise because they can adapt; our setting is stricter: retention is fixed at prefill, and no correction is possible. KVzip~\citep{kim2025kvzip} occupies a middle ground by using the LLM itself to score KV pair importance through context reconstruction, producing a query-agnostic importance estimate that is more accurate than single-forward-pass attention but requires an extra LLM forward pass per context. \textsc{CompilerKV}'s compilation phase amortizes a similar accuracy benefit across all future prompts rather than paying it per context. Sparse-attention methods (e.g., Quest~\citep{tang2024quest}, SparQ~\citep{ribar2024sparq}) reduce attention FLOPS at decode time without modifying the stored cache; they are orthogonal to prefill-only compression and can be composed with \textsc{CompilerKV}.

\textbf{Head heterogeneity and architectural priors.} \citet{voita2019analyzing} established that attention heads are functionally specialized at pre-training time, with most heads performing redundant or noisy operations. \citet{fu2024not} characterized retrieval heads, a small subset that drives long-context recall, and showed their identity is stable across input instances. \citet{du2025heads} further documented how single-prompt attention statistics are heavy-tailed and scale-biased across heads, directly motivating our E1/E2 diagnosis. Prior prefill-only methods that incorporate head-level signals (AdaKV, HeadKV, CAKE) treat per-head importance as a \emph{per-prompt} quantity to be re-estimated at each inference call. \textsc{CompilerKV} instead treats head reliability as a \emph{model-level} prior, compiled once from calibration data, an observation enabled by the cross-corpus stability we establish across all four backbones in \S\ref{sec:cross_domain}.

\textbf{Offline and data-driven compression.} Several works use offline data to inform compression decisions. DuoAttention~\citep{xiao2024duoattention} is the closest conceptual predecessor: it identifies retrieval vs.\ streaming heads offline and assigns different cache strategies. This is valuable, but it is a \emph{binary head-type compiler}; once the assignment is made, the runtime policy has no prompt-risk control and no continuous interaction between head reliability and the token-selection boundary. \textsc{CompilerKV} compiles a different object, a \emph{risk-conditioned retention policy}: its head table is continuous, the gate is prompt-risk conditioned, and the two are optimized against the same prefill-only reward, so the threshold is learned for the utility distribution induced by the head table. MagicPIG~\citep{chen2024magicpig} and ShadowKV~\citep{sun2024shadowkv} also use offline-computed statistics, but they accelerate decode-time retrieval rather than solving irreversible prefill-only retention. Profiling-based adaptive-cache approaches such as FastGen~\citep{ge2023context} use model-specific statistics to choose cache strategies; \textsc{CompilerKV} keeps the base LLM frozen and compiles lightweight retention tables only. The novelty is therefore not ``head awareness'' alone, but the first joint offline compilation of continuous head reliability and prompt-risk thresholds for fixed-cache prefill compression.

\section{Preliminaries}
\label{sec:prelim}

\textbf{KV cache and prefill-only compression.}
For a Transformer with $L$ layers and $H$ heads~\citep{vaswani2017attention},
a prompt of length $T$ produces at layer $l$ a key--value cache
$\{K^{(l)}, V^{(l)}\}$ of size $O(TH d_{\text{head}})$.
At decoding step $t$, the per-head attention weight
$A_{j,t}^{(l,h)} \propto \exp\!\bigl((q_t^{(l,h)})^\top k_j^{(l,h)}/\sqrt{d_{\text{head}}}\bigr)$
draws on all cached positions.
\emph{Prefill-only compression} commits once, at the end of prefill, to an index
set $\mathcal{S}^{(l)}\!\subset\!\{1,\dots,T\}$ with $|\mathcal{S}^{(l)}|\le B_l$,
retaining $\tilde{K}^{(l)}\!=\!K^{(l)}[\mathcal{S}^{(l)}]$,
$\tilde{V}^{(l)}\!=\!V^{(l)}[\mathcal{S}^{(l)}]$ for the entire decoding pass.
The irreversibility of $\mathcal{S}^{(l)}$ is the core difficulty: selection errors
are permanent.

\textbf{Offline RL via Conservative Q-Learning (CQL).}
We cast compression policy learning as a single-step contextual
bandit~\citep{lattimore2020bandit}: context $s\in\mathcal{S}$ indexes a head or risk
bin, action $a\in\mathcal{A}$ encodes a compression choice, and reward $r$ measures
fidelity on a calibration prompt.
The horizon-1 formulation is exact: prefill-only compression admits no future state
to bootstrap from, so CQL acts solely as a \emph{support-regularized offline
estimator}---penalizing Q-values for rarely sampled actions---rather than a
multi-step planner.
Given dataset $\mathcal{D}=\{(x_i,s_i,a_i,r_i)\}$ and behavior policy
$\pi_{\mathrm{beh}}$, we minimize
\begin{equation}\label{eq:cql}
\min_\theta\;
\alpha\!\left(
  \mathbb{E}_{s\sim\mathcal{D}}\!\left[\log\!\sum_{a}\exp Q_\theta(s,a)\right]
  -\mathbb{E}_{(s,a)\sim\pi_{\mathrm{beh}}}\!\left[Q_\theta(s,a)\right]
\right)
+\frac{1}{2}\,\mathbb{E}_{(s,a,r)\sim\mathcal{D}}\!\left[(Q_\theta(s,a)-r)^2\right].
\end{equation}
The conservative term replaces $\arg\max_a\hat{r}(s,a)$ with a penalized rule,
preventing aggressive thresholds on rare high-entropy or high-PPL prompts.
We apply Eq.~\eqref{eq:cql} uniformly to both compiled tables
(\S\ref{sec:head_table}, \S\ref{sec:gate}), with reward
\begin{equation}\label{eq:reward}
r(x,s,a)
= -\bigl(\mathcal{L}_{\text{comp}}(x;s,a)-\mathcal{L}_{\text{full}}(x)\bigr)
  -\lambda\,\Psi(x,s,a),
\end{equation}
where $\mathcal{L}_{\text{comp}}$, $\mathcal{L}_{\text{full}}$ are NLL under
compressed and full caches, and
$\Psi\!=\!\max(0,|\mathcal{I}_{\text{cand}}|-B_l)$ penalizes over-retention.
We set $\alpha\!=\!0.75$; the risk gate uses $\lambda\!=\!\beta_{\mathrm{bud}}/T$
($\beta_{\mathrm{bud}}\!=\!1.0$) and the head table uses $\lambda\!=\!0$.
Full details are in Appendix~\ref{sec:offline_table_compilation}.

\begin{figure*}[!ht]
    \centering
    \includegraphics[width=1\linewidth]{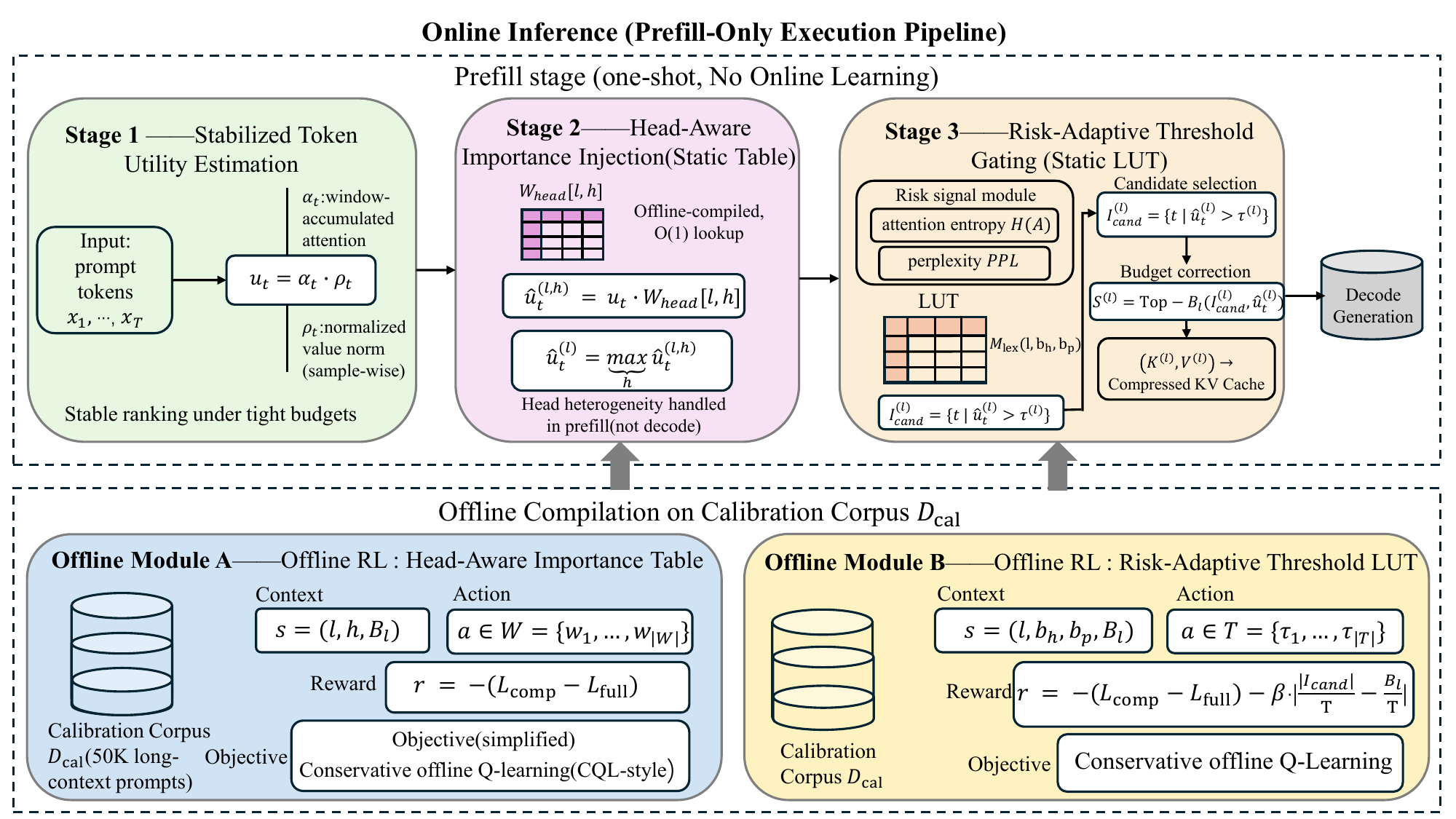} 
    \caption{\textbf{Overview of the \textsc{CompilerKV} framework.} The pipeline has three stages: (1) a stabilized utility score that suppresses transient noise; (2) a compiled Head Heterogeneity Table that governs functional differences among attention heads; and (3) a compiled Risk-Adaptive Gating Table that calibrates the retention threshold to each prompt's complexity.}
    \label{fig:framework}
\end{figure*}

\section{\textsc{CompilerKV}}

Figure~\ref{fig:framework} illustrates the overall \textsc{CompilerKV} framework, 
comprising an online prefill-only execution path and two offline compilation modules. 
To address E1--E3, \textsc{CompilerKV} employs three components within a single 
prefill pass: a parameter-free \emph{stabilized utility} (E1), an offline-compiled 
\emph{Head Heterogeneity Table} (E2), and an offline-compiled \emph{Risk-Adaptive 
Threshold Gate} (E3). These components operate sequentially---ranking tokens, 
rescaling by head reliability, and adjusting the budget threshold for risky 
prompts---after which the KV cache is frozen for all subsequent decoding. Online 
overhead is minimal: two $O(1)$ table lookups per layer and a single sort over 
the candidate pool, with no decoder-side intervention or cache rewriting.

\textbf{Factorized compiler, coupled reward evaluation.} The runtime representation is deliberately factorized: one table indexes head coordinates $(l,h,B_l)$ and the other indexes risk coordinates $(l,b_\text{ent},b_\text{ppl},B_l)$. We do \emph{not} learn a dense joint action over all heads and thresholds. Instead, each table is fitted with the other side of the final retention policy active, so the measured reward includes the score distribution induced by the current head weights and the token set induced by the threshold gate. The interaction therefore enters through reward evaluation and the final selection operator, while inference remains two lookup tables. Table~\ref{tab:ablation_and_compilation} includes an independent-compilation probe that verifies calibrating the two tables against stale boundaries is weaker than calibrating them under the paired retention policy.

\textbf{Inference-time complexity.} Let $T$ be sequence length, $W=|\Omega|$ the observation window, and $L,H$ the number of layers and heads. \textsc{CompilerKV} still reads the observation-window attention mass, so the prefill-side scan is
\begin{equation}
\label{eq:runtime_complexity}
O(LHTW),
\end{equation}
the same as window-based prefill compressors such as SnapKV~\citep{li2024snapkv}. Our savings come after this scan: the averaged window mass is reused across layers and heads, both compiled tables are $O(1)$ lookups, and the risk gate shrinks each layer's candidate pool before Top-$B_l$ selection. All post-scan operations are at most $O(LT\log B)$ and absorbed into the leading term, with no decoder-time scoring or cache rewriting.

\subsection{Stabilized Token Utility (Targets E1)}
\label{sec:utility}

Online attention statistics suffer from two coupled noise sources: transient attention spikes inside a single prompt, and inter-head scale bias that biases aggregate scores toward heads with naturally larger magnitudes. Our utility removes both \emph{without learning}, treating E1 as a clean signal-processing fix:
\begin{equation}
\label{eq:stabilized_utility}
u_t^{(l,h)} = \alpha_t \cdot \rho_t^{(l,h)},\quad
\alpha_t = \frac{T}{W}\!\!\sum_{j\in\Omega} \bar{A}_{j,t},\quad
\rho_t^{(l,h)} = \frac{\|v_t^{(l,h)}\|_2}{\tfrac{1}{T}\sum_k\|v_k^{(l,h)}\|_2},
\end{equation}
where $\Omega = \{T{-}W{+}1,\dots,T\}$, $W=|\Omega|$, and $\bar{A}_{j,t} = \tfrac{1}{LH}\sum_{l,h} A_{j,t}^{(l,h)}$. The factor $T/W$ makes $\alpha_t$ have mean scale near one because $\sum_t\sum_{j\in\Omega}\bar A_{j,t}=W$, which is why absolute thresholds in $[0.8,1.0]$ remain meaningful even for long contexts. Window aggregation smooths transient spikes; head-relative normalization removes scale bias.

\subsection{Head Heterogeneity Table (Targets E2)}
\label{sec:head_table}

E2 cannot be fixed by signal processing because it is an architectural fact: heads differ in functional reliability, and which heads are reliable is determined at pre-training, not at inference. Treating heads equally, as $u_t^{(l,h)}$ alone implicitly does, lets noisy heads pollute the aggregate. We therefore compile reliability \emph{offline} into a frozen table $\mathbf{W}_{\text{head}}\!\in\!\mathbb{R}^{L\times H}$ that the inference pass consults via $O(1)$ lookup.

\textbf{Compilation.} We instantiate the contextual bandit of \S\ref{sec:prelim} with state $s_{l,h}=(l,h,B_l)$, action $a_{l,h}\in\mathcal{W}$ a discretized reliability scalar, and reward Eq.~\eqref{eq:reward} with $\lambda=0$ (head weighting does not directly violate budget). Each head's marginal contribution is measured by perturbing its $a_{l,h}$ alone, but the reward is always evaluated under the \emph{full retention pipeline}, including the current risk gate $\mathbf{T}_\text{gate}$ and the budget-clamp operator (Eq.~\eqref{eq:budget_correction}); the two tables are alternated to convergence. The interaction between heads and threshold therefore enters through the reward landscape rather than through a joint action space. Solving Eq.~\eqref{eq:cql} yields $Q_\theta$, from which we extract:
\begin{equation}
\label{eq:head_compilation}
\mathbf{W}_{\text{head}}[l,h] = \operatorname*{argmax}_{w \in \mathcal{W}} Q_{\theta}(s_{l,h}, w).
\end{equation}

\textbf{Inference: weighted max-pooling gives reliable heads veto power.} The per-token aggregate is
\begin{equation}
\label{eq:max_pooling}
\hat{u}_{t}^{(l)} = \max_{h} \bigl( u_{t}^{(l,h)} \cdot \mathbf{W}_{\text{head}}[l,h] \bigr).
\end{equation}
Max-pooling rather than averaging is the operational consequence of the E2 fix: a single retrieval-critical head strongly attending to a token suffices to retain it, regardless of the consensus from noisy heads. Averaging would let noise drown the veto signal.

\subsection{Risk-Adaptive Threshold Gate (Targets E3)}
\label{sec:gate}

Even with E1 and E2 controlled, applying a fixed Top-$k$ rule across prompts re-opens E3: high-entropy or high-PPL prompts need conservative retention, low-risk prompts tolerate aggressive pruning, and a one-size-fits-all $k$ produces tail failures on the high-risk extreme. We compile the risk-to-threshold mapping offline using the common offline CQL estimator.

\textbf{Two complementary risk signals.} Within $\Omega$, structural risk is captured by attention entropy and semantic risk by local perplexity:
\begin{equation}
\label{eq:risk_features}
\mathcal{R}_\text{struct} = -\!\sum_{t=1}^{T}\!\bar{A}^{'}_t \log\bar{A}^{'}_t,\quad
\mathcal{R}_\text{sem} = \exp\!\Bigl(-\tfrac{1}{|\Omega|}\!\!\sum_{j\in\Omega}\!\!\log P(x_j|x_{<j})\Bigr),
\end{equation}
where $\bar{A}^{'}_t = \tfrac{1}{|\Omega|LH}\sum_{j\in\Omega,l,h} A_{j,t}^{(l,h)}$. High entropy means no clear focal point (pruning is risky); a perplexity spike indicates complex semantic dependencies (history is critical). Together these signals cover structural and semantic risk; our error analysis (\S\ref{sec:cross_domain}) finds that fewer than $8\%$ of failure cases lie outside this signal pair.

\textbf{Compilation.}
We discretize $(\mathcal{R}_{\text{struct}},\mathcal{R}_{\text{sem}})$ into bins
$(b_{\text{ent}},b_{\text{ppl}})$ and instantiate the bandit with state
$s_l=(l,b_{\text{ent}},b_{\text{ppl}},B_l)$, action $a\in[0.8,1.0]$, and reward
Eq.~\eqref{eq:reward} with $\lambda=\beta_{\mathrm{bud}}/T$,
$\beta_{\mathrm{bud}}=1.0$.
The hinge penalty $\Psi$ is asymmetric by design: only over-retention is penalized,
since under-retention is absorbed by the elastic pool
(Eq.~\eqref{eq:budget_correction}).
This asymmetry, combined with the conservative regularization in
Eq.~\eqref{eq:cql}, produces the key E3 fix: the compiled table raises thresholds
on low-risk prompts and lowers them on high-risk ones, biasing toward safer retention
in the rare high-entropy or high-PPL states where online heuristics fail.
Solving Eq.~\eqref{eq:cql} yields $\mathbf{T}_{\text{gate}}$, returning
$\tau^{(l)}=\mathbf{T}_{\text{gate}}[s_l]$ at inference.
We use a $20{\times}4$ entropy$\times$PPL grid with $\tau\!\in\![0.8,1.0]$;
the grid design follows a bias--variance argument in
Appendix~\ref{sec:offline_table_compilation}.

\textbf{Final selection: elastic candidate pool with budget clamp.} Define $\mathcal{I}_\text{cand}^{(l)} = \{t \mid \hat{u}_t^{(l)} \ge \tau^{(l)}\}$. The retained set
\begin{equation}
\label{eq:budget_correction}
\mathcal{S}^{(l)} =
\begin{cases}
\mathcal{I}_\text{cand}^{(l)}, & |\mathcal{I}_\text{cand}^{(l)}| \le B_l, \\
\text{Top-}B_l(\mathcal{I}_\text{cand}^{(l)}, \hat{u}^{(l)}), & |\mathcal{I}_\text{cand}^{(l)}| > B_l,
\end{cases}
\end{equation}
permits elastic under-retention on low-risk prompts and clamps to the budget on high-risk ones. Our ablation shows this elasticity is the single most critical design choice ($\sim$$4.2$-pt drop when removed; Table~\ref{tab:ablation_and_compilation}). The compressed cache $(\tilde K^{(l)}, \tilde V^{(l)})$ is then frozen for the entire decoding pass.

\paragraph{Theoretical grounding.}
\begin{proposition}[Tail error and finite-table calibration]
\label{thm:bound}
Fix a layer $l$ and head $h$. For each decoding query $t$, let $p_t=A_{t,:}^{(l,h)}$ be the full attention row and let $\mathcal{S}^{(l)}$ be the retained prefill index set. Define the discarded mass
$m_t^{(l,h)}=p_t((\mathcal{S}^{(l)})^c)$ and assume $m_t^{(l,h)}<1$ so that the compressed row is the renormalized restriction
$\tilde p_t(j)=p_t(j)\mathbf{1}\{j\in\mathcal{S}^{(l)}\}/(1-m_t^{(l,h)})$.
With $\epsilon_{\rm tail}^{(l,h)}=(T^{-1}\sum_t(m_t^{(l,h)})^2)^{1/2}$,
\begin{equation}
\label{eq:tail_bound_main}
\frac{1}{H}\sum_{h=1}^{H}\|A^{(l,h)}-\tilde A^{(l,h)}\|_F
\le 2\sqrt{T}\,\frac{1}{H}\sum_{h=1}^{H}\epsilon_{\rm tail}^{(l,h)} .
\end{equation}
Moreover, let $\mathcal{A}$ be the finite action table used by the offline compiler. For $N$ independent calibration prompts, suppose the per-prompt loss $\ell(p,a)$ lies in an interval of width $\Delta_R=R_{\max}-R_{\min}$, and define $R(a)=\mathbb{E}[\ell(p,a)]$ and $\hat R(a)=N^{-1}\sum_{i=1}^N\ell(p_i,a)$. Then, with probability at least $1-\delta$,
\begin{equation}
\label{eq:calibration_uniform_main}
\sup_{a\in\mathcal{A}}|R(a)-\hat R(a)|
\le \Delta_R\sqrt{\frac{\log(2|\mathcal{A}|/\delta)}{2N}} .
\end{equation}
Consequently, if $\hat a\in\arg\min_{a\in\mathcal{A}}\hat R(a)$ and $a^\star\in\arg\min_{a\in\mathcal{A}}R(a)$, then
\begin{equation}
\label{eq:calibration_excess_main}
R(\hat a)-R(a^\star)
\le 2\Delta_R\sqrt{\frac{\log(2|\mathcal{A}|/\delta)}{2N}} .
\end{equation}
\end{proposition}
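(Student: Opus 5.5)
The proof has two independent parts. The tail bound is a deterministic, row-wise computation. The calibration bounds are a standard Hoeffding-plus-union-bound argument over the finite table $\mathcal{A}$.

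\textbf{Part 1 (tail bound).} We fix $l,h$ and a query row $t$, and compute $\|p_t-\tilde p_t\|_1$ exactly. On the discarded set $(\mathcal{S}^{(l)})^c$ the difference is $p_t(j)$, which contributes total mass $m_t$. On the retained set the difference is $p_t(j)\bigl(\tfrac{1}{1-m_t}-1\bigr)=p_t(j)\tfrac{m_t}{1-m_t}$. Summed over $\mathcal{S}^{(l)}$, whose total mass is $1-m_t$, this also gives $m_t$. Hence $\|p_t-\tilde p_t\|_1=2m_t$, and since $\|\cdot\|_2\le\|\cdot\|_1$ we get $\|p_t-\tilde p_t\|_2\le 2m_t$.

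We then square and sum over rows, treating the decoding queries as $T$ rows to match the normalisation in $\epsilon_{\rm tail}$:
\[
\|A^{(l,h)}-\tilde A^{(l,h)}\|_F^2\le 4\sum_t m_t^2=4T(\epsilon_{\rm tail}^{(l,h)})^2 .
\]
Taking square roots gives $\|A^{(l,h)}-\tilde A^{(l,h)}\|_F\le 2\sqrt{T}\,\epsilon_{\rm tail}^{(l,h)}$. Averaging over $h$ yields \eqref{eq:tail_bound_main}. No independence across rows is used; the assumption $m_t<1$ is needed only so that $\tilde p_t$ is well defined.

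\textbf{Part 2 (uniform calibration).} We fix $a\in\mathcal{A}$. The variables $\ell(p_i,a)$ are independent across prompts, bounded in an interval of width $\Delta_R$, and have mean $R(a)$. Two-sided Hoeffding gives
\[
\Pr\bigl(|\hat R(a)-R(a)|>\varepsilon\bigr)\le 2\exp(-2N\varepsilon^2/\Delta_R^2).
\]
A union bound over the $|\mathcal{A}|$ actions, with the right-hand side set to $\delta$, gives $\varepsilon=\Delta_R\sqrt{\log(2|\mathcal{A}|/\delta)/(2N)}$, which is \eqref{eq:calibration_uniform_main}.

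Independence is required only at the prompt level, which is exactly the caveat stated in the proposition. The bound is applied to each fixed $a$ before $\hat a$ is selected, so the data-dependence of $\hat a$ causes no difficulty.

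\textbf{Excess risk.} On this event we use the standard three-term decomposition:
\[
R(\hat a)-R(a^\star)=[R(\hat a)-\hat R(\hat a)]+[\hat R(\hat a)-\hat R(a^\star)]+[\hat R(a^\star)-R(a^\star)].
\]
The middle term is $\le 0$ because $\hat a$ minimises $\hat R$. Each outer term is at most the supremum bound, which gives \eqref{eq:calibration_excess_main}.

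\textbf{Main obstacle.} The only delicate point is the bookkeeping in Part 1. First, we need the exact $\ell_1$ identity $2m_t$ for renormalised restriction. Second, the row count in the Frobenius sum must match the $T^{-1}$ normalisation in $\epsilon_{\rm tail}$. If the number of decoding rows differs from $T$, I would state the bound with that row count in place of $T$. Part 2 is routine.
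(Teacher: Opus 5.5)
Your proposal is correct and follows the paper's proof essentially step for step: the exact identity $\|p_t-\tilde p_t\|_1=2m_t$, the comparison $\|\cdot\|_2\le\|\cdot\|_1$ summed over $T$ rows and then averaged over heads, two-sided Hoeffding with a union bound over $\mathcal{A}$, and the standard $2\varepsilon$ excess-risk argument on the same uniform-convergence event. Your caveat about the row count also matches the paper, which likewise sums over $t=1,\dots,T$ so that the sum lines up with the $T^{-1}$ normalisation in $\epsilon_{\rm tail}$.
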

Eq.~\eqref{eq:tail_bound_main} says that the approximation error is controlled by the discarded tail mass targeted by stabilized utility and reliable-head vetoing. Eqs.~\eqref{eq:calibration_uniform_main}--\eqref{eq:calibration_excess_main} provide the statistical reason for compilation: the table concentrates over independent \emph{prompts}, not over attention rows inside a prompt. \textsc{CompilerKV} therefore does not require a row-wise independence assumption; it lowers variance by moving the corrective decision to the calibration scale.

% \begin{figure*}[!ht]
%     \centering
%     % 1. FullKV (Top - Reference)
%     \begin{subfigure}[b]{0.95\linewidth}
%         \centering
%         \includegraphics[width=\linewidth]{mistral-7b-instruct-v0.2_fullkv.pdf}
%         \caption{FullKV (Avg. Score: 0.92) - Lossless Upper Bound}
%         \label{fig:niah_fullkv}
%     \end{subfigure}
    
%     \vspace{0.2cm} % 垂直间距
    
%     % 2. DynamicKV (Strong Baseline)
%     \begin{subfigure}[b]{0.95\linewidth}
%         \centering
%         \includegraphics[width=\linewidth]{mistral-7b-instruct-v0.2_dynamickv.pdf}
%         \caption{DynamicKV (Avg. Score: 0.83) - SOTA Baseline}
%         \label{fig:niah_dynamic}
%     \end{subfigure}
    
%     \vspace{0.2cm}
    
%     % 3. CompilerKV (Ours - Bottom)
%     \begin{subfigure}[b]{0.95\linewidth}
%         \centering
%         \includegraphics[width=\linewidth]{mistral-7b-instruct-v0.2_compilerkv.pdf}
%         \caption{\textbf{CompilerKV (Avg. Score: 0.86)} - Ours}
%         \label{fig:niah_compilerkv}
%     \end{subfigure}
    
%     \caption{\textbf{Needle-in-a-Haystack Pressure Test Comparison.} 
%     We visualize the retrieval accuracy (Green=100\%, Blue=0\%) on Mistral-7B across varying context lengths (x-axis) and needle depths (y-axis). 
%     While the strong baseline \textbf{DynamicKV} (b) exhibits fragmentation and information loss at extreme lengths, \textbf{CompilerKV} (c) maintains a robust, dense retrieval pattern that closely mirrors the lossless \textbf{FullKV} oracle (a), demonstrating superior stability under compression.}
%     \label{fig:niah_comparison}
% \end{figure*}

\begin{table}[!ht]
\centering
\caption{\textbf{LongBench results (512-token budget).} The best compressed result is \textbf{bold} and the second best is \underline{underlined}; FullKV is shown as the lossless reference and is not used for compressed-method ranking. AdaKV and CAKE are included as recent head- and layer-adaptive baselines.}
\resizebox{\textwidth}{!}{
\scriptsize
\renewcommand{\arraystretch}{0.92}
\setlength{\tabcolsep}{2.3pt}
\begin{tabular}{c|c|ccc|ccc|ccc|ccc|cccc|c}
\toprule
& & \multicolumn{3}{c|}{\textit{Single-Doc QA}} & \multicolumn{3}{c|}{\textit{Multi-Doc QA}} & \multicolumn{3}{c|}{\textit{Summarization}} & \multicolumn{3}{c|}{\textit{Few-Shot Learning}} & \multicolumn{4}{c|}{\textit{Synthetic \& Code}} & \\
\cmidrule(lr){3-5} \cmidrule(lr){6-8} \cmidrule(lr){9-11} \cmidrule(lr){12-14} \cmidrule(lr){15-18}
\raisebox{3ex}{\textbf{Model}} &
\raisebox{3ex}{\textbf{Method}} & 
\rotatebox{45}{\textbf{NarrativeQA}} & \rotatebox{45}{\textbf{Qasper}} & \rotatebox{45}{\textbf{MF-en}} & 
\rotatebox{45}{\textbf{HotpotQA}} & \rotatebox{45}{\textbf{2WikiMQA}} & \rotatebox{45}{\textbf{MuSiQue}} & 
\rotatebox{45}{\textbf{GovReport}} & \rotatebox{45}{\textbf{QMSum}} & \rotatebox{45}{\textbf{MultiNews}} & 
\rotatebox{45}{\textbf{TREC}} & \rotatebox{45}{\textbf{TriviaQA}} & \rotatebox{45}{\textbf{SAMSum}} & 
\rotatebox{45}{\textbf{Lcc}} & \rotatebox{45}{\textbf{RB-P}} & \rotatebox{45}{\textbf{Pcount}} & \rotatebox{45}{\textbf{PRe}} & 
\raisebox{3ex}{\textbf{\textit{Avg.}}} \\
\midrule
\multirow{9}{*}{\rotatebox{90}{\parbox{2cm}{\centering \textbf{InternLM-2.5}\\ \textbf{7B-Chat-1M}}}}
& FullKV & 22.47 & 27.58 & 39.98 & 40.96 & 33.52 & 26.61 & 33.01 & 25.18 & 26.28 & 72.5 & 86.76 & 39.84 & 55.86 & 57.90 & 2.92 & 100.00 & 43.21 \\
\cmidrule{2-19}
& SnapKV & 16.86 & 23.28 & 36.24 & 32.14 & 19.89 & 23.21 & 17.81 & 23.18 & 22.44 & 71.0 & 84.05 & 34.34 & 50.32 & \underline{53.34} & 1.00 & \textbf{96.50} & 37.85 \\
& PyramidKV & 17.62 & 21.08 & 37.52 & 32.21 & 21.31 & 22.03 & 19.37 & \underline{24.06} & 22.22 & 73.0 & 83.94 & 34.61 & 50.45 & 49.72 & 1.05 & 95.50 & 37.86 \\
& DynamicKV & 17.77 & 23.87 & 37.74 & 32.98 & 21.13 & 23.21 & 19.13 & 23.49 & 22.48 & \underline{75.0} & 84.89 & 36.70 & 50.70 & 51.08 & 0.91 & 95.50 & 38.54 \\
& AdaKV & 18.23 & 24.64 & 38.30 & 33.64 & 21.66 & \underline{23.55} & 19.74 & 23.96 & 22.92 & \textbf{75.3} & \underline{85.28} & 37.23 & 51.31 & 51.91 & \underline{1.30} & \underline{95.78} & 39.05 \\
& HeadKV & 17.97 & 24.67 & 38.24 & 33.48 & 21.53 & 23.31 & 19.73 & 23.89 & 22.78 & \underline{75.0} & 85.09 & 37.10 & 51.20 & 51.88 & 1.01 & 95.50 & 38.90 \\
& CAKE & 18.32 & 25.04 & 38.49 & 33.78 & 21.81 & 23.44 & 20.18 & 24.02 & 23.15 & 75.0 & 85.12 & 37.30 & 51.42 & 52.14 & 1.16 & 95.62 & 39.12 \\
& ChunkKV & \underline{18.47} & \underline{25.17} & \underline{38.64} & \underline{33.88} & \underline{21.93} & 23.51 & \underline{20.33} & \textbf{24.29} & \underline{23.28} & \underline{75.0} & 85.19 & \underline{37.40} & \underline{51.50} & 52.18 & 1.21 & 95.50 & \underline{39.22} \\
\rowcolor{gray!10} & \textbf{Ours} & \textbf{22.87} & \textbf{27.22} & \textbf{39.37} & \textbf{40.49} & \textbf{33.25} & \textbf{26.05} & \textbf{33.41} & 24.05 & \textbf{23.68} & \underline{75.0} & \textbf{86.64} & \textbf{39.11} & \textbf{55.16} & \textbf{56.88} & \textbf{3.00} & 95.50 & \textbf{42.61} \\
\midrule

\multirow{9}{*}{\rotatebox{90}{\parbox{2cm}{\centering \textbf{LLaMA-3}\\\textbf{8B-Instruct}}}}
& FullKV & 25.16 & 31.81 & 39.59 & 43.09 & 36.15 & 21.77 & 28.62 & 23.34 & 26.33 & 75.0 & 90.50 & 42.36 & 59.04 & 53.93 & 5.20 & 69.25 & 41.95 \\
\cmidrule{2-19}
& SnapKV & 24.62 & 22.78 & 37.88 & 42.96 & 34.82 & 20.25 & 22.63 & 22.54 & 23.93 & 70.0 & 90.39 & 40.30 & 60.27 & 55.85 & 5.74 & 69.50 & 40.28 \\
& PyramidKV & 24.48 & 23.51 & 36.24 & 42.33 & 31.95 & 20.73 & 23.37 & \underline{23.01} & 24.37 & 72.5 & 90.43 & 40.54 & 59.27 & 54.87 & 5.88 & 69.50 & 40.19 \\
& DynamicKV & 24.80 & 24.62 & 36.69 & 44.13 & 33.25 & 20.82 & 23.00 & 22.54 & 24.12 & 72.5 & 90.39 & 40.76 & 61.21 & 56.91 & 5.78 & 69.50 & 40.69 \\
& AdaKV & 25.06 & 27.71 & 37.64 & 44.01 & 35.25 & 21.50 & 23.49 & 22.90 & 24.64 & 73.1 & \underline{90.64} & 41.25 & 61.63 & 57.61 & 6.17 & \underline{70.35} & 41.43 \\
& HeadKV & 24.75 & \underline{29.75} & 38.03 & \textbf{44.43} & \underline{36.45} & 21.67 & 23.50 & 22.95 & 24.70 & 73.0 & 90.45 & 41.20 & 61.50 & 57.60 & 6.00 & 69.50 & 41.59 \\
& CAKE & 25.61 & 28.04 & \underline{38.34} & 44.01 & 35.91 & \underline{21.83} & \underline{24.17} & 22.90 & \underline{25.30} & \underline{73.2} & 90.59 & \underline{41.64} & \underline{62.00} & \underline{58.34} & \underline{6.36} & \textbf{70.96} & \underline{41.83} \\
& ChunkKV & \underline{25.70} & 26.72 & 38.19 & \underline{44.33} & 35.25 & 21.72 & 24.10 & \textbf{23.44} & 25.22 & 73.0 & 90.39 & 41.56 & 61.91 & 58.31 & 6.18 & 69.50 & 41.60 \\
\rowcolor{gray!10} & \textbf{Ours} & \textbf{25.97} & \textbf{31.11} & \textbf{40.01} & 44.04 & \textbf{36.69} & \textbf{21.86} & \textbf{26.66} & 22.93 & \textbf{26.40} & \textbf{75.0} & \textbf{91.00} & \textbf{41.67} & \textbf{62.47} & \textbf{58.91} & \textbf{6.64} & 69.50 & \textbf{42.55} \\
\midrule

\multirow{9}{*}{\rotatebox{90}{\parbox{2cm}{\centering \textbf{Qwen2}\\\textbf{7B-Instruct}}}}
& FullKV & 25.14 & 42.35 & 45.04 & 14.80 & 14.13 & 9.23 & 36.35 & 23.79 & 26.51 & 76.5 & 89.16 & 45.23 & 60.30 & 60.78 & 6.50 & 75.50 & 40.71 \\
\cmidrule{2-19}
& SnapKV & 23.86 & 38.50 & 44.68 & \textbf{15.60} & 14.62 & \underline{9.13} & 24.56 & 22.39 & 23.18 & 70.0 & 89.31 & 43.32 & 58.68 & \underline{60.74} & 5.00 & 72.00 & 38.47 \\
& PyramidKV & 24.47 & 37.60 & 43.51 & 14.48 & 12.83 & 8.99 & 23.59 & 22.30 & 22.41 & 74.0 & 89.21 & 43.40 & 57.67 & 56.14 & 6.50 & 74.00 & 38.19 \\
& DynamicKV & 24.66 & 40.44 & 45.30 & 15.42 & 13.89 & 8.46 & 25.51 & 22.77 & 22.92 & 74.0 & 89.27 & 43.18 & 60.38 & 59.32 & 7.00 & 74.00 & 39.16 \\
& AdaKV & 25.04 & 41.13 & \underline{45.50} & 15.47 & 14.34 & 8.77 & 26.04 & 23.16 & 23.33 & 74.5 & \underline{89.47} & 43.52 & 60.79 & 59.90 & 7.41 & \underline{74.36} & 39.55 \\
& HeadKV & 24.86 & 41.24 & 45.30 & 15.42 & 14.29 & 8.66 & 26.11 & 23.17 & 23.32 & 74.5 & 89.27 & 43.38 & 60.68 & 59.82 & 7.30 & 74.00 & 39.46 \\
& CAKE & \underline{25.20} & \underline{41.66} & 45.41 & 15.47 & \underline{14.69} & 8.85 & \underline{26.62} & \underline{23.51} & \underline{23.74} & \underline{74.6} & 89.35 & \underline{43.63} & \underline{60.95} & 60.20 & \underline{7.57} & 74.24 & \underline{39.73} \\
& ChunkKV & 25.16 & 41.64 & 45.30 & 15.42 & \underline{14.69} & 8.76 & 26.61 & 23.47 & 23.72 & 74.5 & 89.27 & 43.58 & 60.88 & 60.12 & 7.50 & 74.00 & 39.66 \\
\rowcolor{gray!10} & \textbf{Ours} & \textbf{25.65} & \textbf{42.50} & \textbf{45.96} & \underline{15.50} & \textbf{15.14} & \textbf{9.50} & \textbf{36.96} & \textbf{23.90} & \textbf{26.16} & \textbf{76.5} & \textbf{89.77} & \textbf{45.28} & \textbf{61.08} & \textbf{61.10} & \textbf{8.00} & \textbf{75.00} & \textbf{41.13} \\
\midrule

\multirow{9}{*}{\rotatebox{90}{\parbox{2cm}{\centering \textbf{Mistral-7B}\\\textbf{Instruct-v0.2}}}}
& FullKV & 26.63 & 32.99 & 49.34 & 42.77 & 27.35 & 18.77 & 32.87 & 24.24 & 27.10 & 71.0 & 86.23 & 42.96 & 56.93 & 54.49 & 2.75 & 86.98 & 42.71 \\
\cmidrule{2-19}
& SnapKV & 24.96 & 27.97 & 49.04 & 39.93 & 25.18 & 17.64 & 24.14 & 23.69 & 24.41 & 67.5 & 86.09 & 41.11 & \underline{56.73} & 53.11 & 2.86 & \textbf{86.98} & 40.71 \\
& PyramidKV & 23.55 & 28.79 & 48.71 & 41.00 & 25.64 & 16.35 & 24.79 & 23.52 & 24.49 & 69.5 & 86.20 & 42.58 & 55.45 & 51.67 & 3.53 & 81.81 & 40.47 \\
& DynamicKV & 25.63 & 29.11 & 48.41 & 39.80 & 26.62 & 16.72 & 24.73 & 23.42 & 24.83 & 70.5 & 86.74 & 43.01 & 55.40 & 52.45 & 3.20 & 83.57 & 40.88 \\
& AdaKV & 25.87 & 30.57 & 49.62 & 41.66 & 27.16 & 18.37 & 25.15 & 23.95 & 25.13 & \underline{70.8} & 86.65 & 42.88 & 55.97 & 53.00 & 3.55 & 84.57 & 41.55 \\
& HeadKV & 25.59 & \underline{31.33} & \underline{50.26} & \underline{42.66} & 27.20 & \underline{19.37} & 25.15 & 24.10 & 25.05 & 70.5 & \underline{86.90} & \underline{43.20} & 56.00 & 52.90 & 3.40 & 83.60 & 41.70 \\
& CAKE & \underline{26.11} & 30.79 & 49.58 & 41.64 & \underline{27.43} & 18.31 & \underline{25.73} & \underline{24.48} & \underline{25.68} & \underline{70.8} & 86.65 & 42.88 & 56.43 & \underline{53.40} & \underline{3.83} & 85.10 & \underline{41.80} \\
& ChunkKV & 25.93 & 30.01 & 48.81 & 40.60 & 27.12 & 17.32 & 25.53 & 24.22 & 25.53 & 70.5 & \textbf{87.04} & \textbf{43.51} & 56.20 & 53.15 & 3.60 & 83.57 & 41.42 \\
\rowcolor{gray!10} & \textbf{Ours} & \textbf{26.71} & \textbf{32.51} & \textbf{50.29} & \textbf{43.16} & \textbf{27.46} & \textbf{20.15} & \textbf{32.16} & \textbf{24.59} & \textbf{27.13} & \textbf{71.5} & 86.68 & 42.91 & \textbf{57.13} & \textbf{54.37} & \textbf{4.29} & \underline{86.02} & \textbf{42.94} \\
\bottomrule
\end{tabular}
}
\label{main_results}
\end{table}

\begin{figure}[!ht]
\centering
\includegraphics[width=0.82\textwidth]{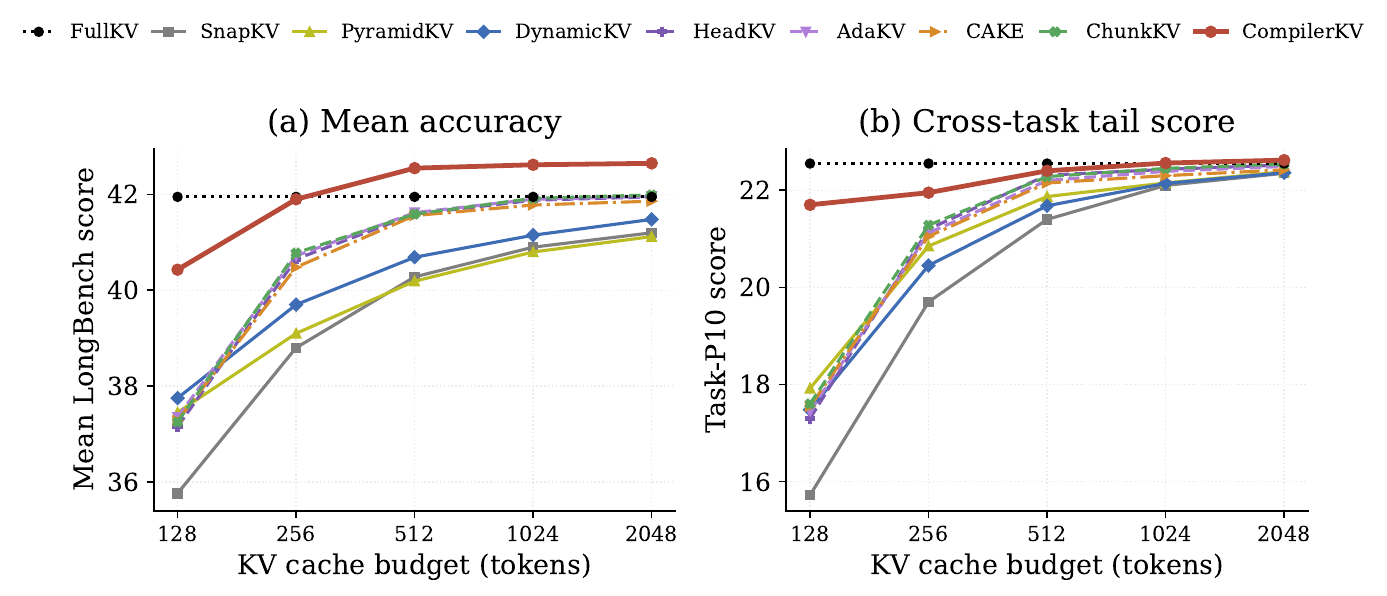}
\caption{\textbf{Performance vs.\ KV cache budget.} Left: mean LongBench accuracy on LLaMA-3-8B. Right: cross-task tail score (Task-P10). All budget curves are monotone with cache size; \textsc{CompilerKV} is the strongest compressed method across all budgets and slightly exceeds FullKV at larger budgets.}
\label{fig:kv_size_curve}
\vspace{-0.5em}
\end{figure}

\section{Experiments}

\subsection{Experimental Setup}

\textbf{Protocol.} We evaluate LongBench~\citep{bai2024longbench} with its standard per-task metrics and macro-average across 16 tasks; transfer is tested on RULER-16k~\citep{hsieh2024ruler} and MATH-500~\citep{lightman2023lets}. Unless noted, every compressed method uses a 512-token prefill KV budget. We compare against FullKV and seven prefill-only baselines spanning token (SnapKV), layer (PyramidKV, CAKE), task (DynamicKV), head (AdaKV, HeadKV), and chunk (ChunkKV) granularity; additional baselines appear in Appendix~\ref{app:external_sota}. We use four 7B--8B backbones (LLaMA-3, Qwen2, Mistral, InternLM), and compile \textsc{CompilerKV} on a disjoint $\sim$50K-prompt calibration corpus.

% Needle-in-a-Haystack heatmap figure moved to Appendix~\ref{app:ex} to save space.
% \begin{figure*}[!ht] -- moved to appendix, see Figure~\ref{fig:niah_6_comparison}
% [original 6-panel comparison preserved in Appendix~\ref{app:ex}]

% \begin{table*}[t]
% \centering
% \small
% \setlength{\tabcolsep}{8pt} % 稍微增加列间距美化
% \caption{\textbf{Component-wise Ablation Study.} We evaluate the contribution of each module by selectively removing it while keeping the budget fixed at 512 tokens. The Risk-Adaptive Threshold proves to be the most critical component, followed by ``Stabilized Utility".}
% \begin{tabular}{lcccc}
% \toprule
% \textbf{Method (512 retention)} &
% \textbf{Mistral-7B} &
% \textbf{LLaMA3-8B} &
% \textbf{Qwen2-7B} &
% \textbf{InternLM2.5-7B} \\
% \midrule
% \rowcolor{gray!10} \textbf{Full method (Ours)} & \textbf{41.21} & \textbf{40.98} & \textbf{39.50} & \textbf{39.18} \\
% \midrule
% Head Reliability ($W_{\text{head}}\!=\!1$) & 40.26 & 40.13 & 38.60 & 38.38 \\
% Risk-Adaptive Threshold ($\tau^{(l)}\!=\!\bar{\tau}^{(l)}$) & 38.16 & 38.13 & 36.30 & 36.23 \\
% Stabilized Utility ($u_t\!=\!\alpha_t$) & 39.36 & 39.28 & 37.55 & 37.43 \\
% \bottomrule
% \end{tabular}
% \label{tab:ablation_512}
% \end{table}

\subsection{Main Results, Robustness, and Ablation}
\label{sec:main_results}

\textbf{LongBench compressed-SOTA.} Table~\ref{main_results} reports 512-token LongBench results on four backbones. \textsc{CompilerKV} is best among compressed methods on every backbone, improving over the strongest baseline by $+3.39/+0.73/+1.40/+1.14$ on InternLM/LLaMA-3/Qwen2/Mistral and $+1.67$ on average (task-bootstrap 95\% CI $[+1.08,+2.37]$). It is also FullKV-level at this point: 42.31 vs.\ 42.14 averaged over backbones, with three of four backbones slightly above the FullKV reference. We treat this as budget-dependent denoising rather than the headline finding: at 128 tokens (Table~\ref{table:longbench_128}), \textsc{CompilerKV} remains the best compressed method but falls below FullKV under an extreme bottleneck.

\textbf{Mechanism.} The pattern matches our bias--variance argument: prompt-local scorers are noisy, while compiled tables trade small calibration bias for lower variance, with the gap widening under tight budgets (Fig.~\ref{fig:kv_size_curve}). Stable head reliability ($\bar{\rho}{=}0.90$) supports max-pool single-head voting, and the risk gate mainly helps long-tail prompts; the slight 512-token excess over FullKV disappears at 128 tokens, so we attribute it to denoising rather than information gain.

% \textcolor{blue}{-----------}

\begin{table}[H]
\centering
\caption{\textbf{Ablation and compiler probe} (LongBench avg., 512-token budget). Top: component removals cause the predicted E1--E3 regressions. Middle: offline reward regression ($\alpha{=}0$) isolates the conservative CQL gain. Bottom: joint head--risk compilation outperforms independent compilation.}
\label{tab:ablation_and_compilation}
\setlength{\tabcolsep}{5.2pt}
\renewcommand{\arraystretch}{0.90}
\scriptsize
\begin{tabular}{lcccc}
\toprule
\textbf{Setting} & \textbf{Mistral-7B} & \textbf{LLaMA-3-8B} & \textbf{Qwen2-7B} & \textbf{InternLM-2.5} \\
\midrule
\multicolumn{5}{l}{\textit{Ablation under E1--E3 framing}} \\
\rowcolor{gray!10} \textbf{CompilerKV (Full)} & \textbf{42.94} & \textbf{42.55} & \textbf{41.13} & \textbf{42.61} \\
w/o Stabilized Utility   & 40.20 & 39.84 & 38.61 & 38.09  \\
w/o Head Reliability     & 40.76 & 40.31 & 39.58 & 39.07  \\
w/o Risk-Adaptive $\tau$ & 38.97 & 38.52 & 37.73 & 37.24  \\
\midrule
\multicolumn{5}{l}{\textit{Compiler objective}} \\
Grid search + fixed bins                     & 40.41 & 39.98 & 38.93 & 38.19 \\
Offline reward regression ($\alpha{=}0$)    & 42.18 & 41.88 & 40.54 & 41.76 \\
Online $\epsilon$-greedy bandit             & 40.26 & 39.81 & 38.88 & 38.09 \\
\midrule
\multicolumn{5}{l}{\textit{Coupling probe}} \\
Head table only (global gate)                & 40.76 & 40.31 & 39.58 & 39.07 \\
Risk gate only (uniform heads)               & 40.92 & 40.46 & 39.70 & 39.31 \\
Independent head $\rightarrow$ gate compile  & 42.02 & 41.63 & 40.28 & 41.61 \\
\rowcolor{gray!10} \textbf{Joint head--risk compile} & \textbf{42.94} & \textbf{42.55} & \textbf{41.13} & \textbf{42.61} \\
\bottomrule
\end{tabular}
\vspace{-0.7em}
\end{table}

\textbf{Model-to-model table transfer.} The compiled retention table is the most reusable finding of this paper. Cross-corpus reliability rankings are stable on every backbone ($\rho{=}0.91/0.90/0.90/0.89$; Appendix~\ref{sec:cross_domain}), and direct table transfer across source--target model pairs loses only $0.4$--$0.8$ LongBench points on average relative to target compilation (Appendix~\ref{sec:model_transfer}). Thus \textsc{CompilerKV} learns a portable KV-retention prior rather than prompt- or model-specific hyperparameters.

\textbf{Pressure-regime gains.} The more substantial payoff appears at long contexts and tight budgets (Table~\ref{tab:pressure_summary}). For RULER length scaling, we avoid an unrealistically fixed 512-token cache and instead keep the retained ratio constant at $512/32k$: the budgets are 256/512/1024/2048 for 16k/32k/64k/128k contexts. Under this setting, \textsc{CompilerKV} remains the strongest compressed method through 128k ($\sim\!73$ vs.\ FullKV $\sim\!79$, SnapKV $\sim\!38$, KVZip $\sim\!66.5$), while correctly staying below FullKV at long lengths. On Needle-in-a-Haystack at 32k, it reaches $0.89$ vs.\ SnapKV $0.42$; at the extreme 128-token budget it remains the best compressed method on all four backbones (Appendix~\ref{table:longbench_128}). Length scaling, NIAH heatmaps, and a ten-family stress suite appear in Appendix~\ref{app:ex}.

\textbf{Coupled compilation explains the gain.} Table~\ref{tab:ablation_and_compilation} decomposes where the points come from. The component ablation confirms the E1--E3 framing: removing the risk gate is the largest single hit ($-3.97/-4.03/-3.40/-5.37$), followed by stabilized utility and head reliability. The compiler-objective block isolates the conservative-CQL contribution: replacing CQL with plain offline reward regression ($\alpha{=}0$) costs $0.6$--$0.9$ points, while online $\epsilon$-greedy bandit collapses to grid-search-level performance, confirming that the gain is not generic offline learning but \emph{support-regularized} offline learning. The coupling block rules out a simple module-stack explanation: independent compilation (head table fitted with a global gate, then gate fitted on top) is $0.85$--$1.00$ points below joint compilation.

\textbf{Efficiency.} A fixed 512-token prefill budget gives $93.75\%\!/96.88\%\!/98.44\%$ compression at $8k\!/16k\!/32k$ context, with $1.31\!\times\!/1.78\!\times\!/2.45\!\times$ decoding throughput on a single A100-80GB. At 32k input plus 8k output, \textsc{CompilerKV} sustains batch-16 serving where FullKV is OOM beyond batch~8. Full TTFT/TPOT/memory profiles, a quality--memory Pareto plot, and per-batch numbers appear in Appendix~\ref{sec:batching_app}.

\section{Conclusion}\label{sec:end_main}
\textsc{CompilerKV} reframes prefill-only KV compression as an offline-compiled
retention policy. Three jointly calibrated components address the core failure modes:
a parameter-free stabilized utility suppresses single-prompt noise (E1); a compiled
Head Heterogeneity Table encodes architectural head reliability as a portable prior
(E2); and a Risk-Adaptive Threshold Gate maps prompt-level entropy and perplexity to
safe retention thresholds via conservative Q-learning (E3). The resulting tables
transfer across corpora ($\bar{\rho}{=}0.90$) and model pairs (${\le}0.8$ LongBench
points), achieving compressed-SOTA on four backbones at 512-token budget with
$O(1)$ online overhead. Future work targets streaming table updates and
broader architecture families.

\begin{ack}
Use unnumbered first level headings for the acknowledgments. All acknowledgments
go at the end of the paper before the list of references. Moreover, you are required to declare
funding (financial activities supporting the submitted work) and competing interests (related financial activities outside the submitted work).
More information about this disclosure can be found at: \url{https://neurips.cc/Conferences/2026/PaperInformation/FundingDisclosure}.

Do {\bf not} include this section in the anonymized submission, only in the final paper. The \texttt{ack} environment provided in the style file automatically hides this section in the anonymized submission.
\end{ack}

% In the unusual situation where you want a paper to appear in the
% references without citing it in the main text, use \nocite
% \nocite{langley00}

\bibliographystyle{plainnat}
\bibliography{example_paper}

%%%%%%%%%%%%%%%%%%%%%%%%%%%%%%%%%%%%%%%%%%%%%%%%%%%%%%%%%%%%%%%%%%%%%%%%%%%%%%%
%%%%%%%%%%%%%%%%%%%%%%%%%%%%%%%%%%%%%%%%%%%%%%%%%%%%%%%%%%%%%%%%%%%%%%%%%%%%%%%
% APPENDIX
%%%%%%%%%%%%%%%%%%%%%%%%%%%%%%%%%%%%%%%%%%%%%%%%%%%%%%%%%%%%%%%%%%%%%%%%%%%%%%%
%%%%%%%%%%%%%%%%%%%%%%%%%%%%%%%%%%%%%%%%%%%%%%%%%%%%%%%%%%%%%%%%%%%%%%%%%%%%%%%
\newpage
\appendix
\setcounter{equation}{0}
\setcounter{table}{0}
\setcounter{figure}{0}
\renewcommand{\theequation}{A\arabic{equation}}
\renewcommand{\thetable}{A\arabic{table}}
\renewcommand{\thefigure}{A\arabic{figure}}

\section{Proof of Proposition \ref{thm:bound}}
\label{app:thm}
\begin{proof}
We first prove the deterministic attention approximation bound. Fix $(l,h)$ and suppress these superscripts. For a query row $p_t=A_{t,:}$ and retained set $S=\mathcal{S}^{(l)}$, write
$m_t=p_t(S^c)=\sum_{j\notin S}p_t(j)$ and define the compressed row by
\begin{equation}
\label{eq:proof_row_restriction}
\tilde p_t(j)=\frac{p_t(j)\mathbf{1}\{j\in S\}}{1-m_t},\qquad m_t<1.
\end{equation}
The row-wise $\ell_1$ error is exactly the mass removed outside $S$ plus the renormalization error inside $S$:
\begin{align}
\|p_t-\tilde p_t\|_1
&=\sum_{j\in S}p_t(j)\left|1-\frac{1}{1-m_t}\right|+
  \sum_{j\notin S}p_t(j) \nonumber\\
&=\frac{m_t}{1-m_t}\sum_{j\in S}p_t(j)+m_t
=2m_t .
\label{eq:proof_l1_exact}
\end{align}
Since $\|x\|_2\le\|x\|_1$ for every row vector,
\begin{align}
\|A-\tilde A\|_F^2
&=\sum_{t=1}^{T}\|p_t-\tilde p_t\|_2^2
\le \sum_{t=1}^{T}\|p_t-\tilde p_t\|_1^2
=4\sum_{t=1}^{T}m_t^2 .
\label{eq:proof_frobenius_row}
\end{align}
Taking square roots and using $\epsilon_{\rm tail}=(T^{-1}\sum_t m_t^2)^{1/2}$ yields
\begin{equation}
\label{eq:proof_single_head_bound}
\|A-\tilde A\|_F\le 2\sqrt{T}\,\epsilon_{\rm tail}.
\end{equation}
Averaging Eq.~\eqref{eq:proof_single_head_bound} over heads gives Eq.~\eqref{eq:tail_bound_main}. This part is deterministic: it conditions on a fixed prompt and a fixed retained set, and it assumes no independence among query rows, heads, or tokens inside the prompt.

We now prove the finite-table calibration claim. For every action $a\in\mathcal{A}$, define
$R(a)=\mathbb{E}_{p\sim\mathcal{P}}[\ell(p,a)]$ and
$\hat R(a)=N^{-1}\sum_{i=1}^{N}\ell(p_i,a)$, where the calibration prompts $p_i$ are independent and
$\ell(p_i,a)\in[R_{\min},R_{\max}]$. Hoeffding's inequality gives, for any fixed $a$,
\begin{equation}
\label{eq:proof_hoeffding_single}
\Pr\!\left(|R(a)-\hat R(a)|>\varepsilon\right)
\le 2\exp\!\left(-\frac{2N\varepsilon^2}{\Delta_R^2}\right),
\qquad \Delta_R=R_{\max}-R_{\min}.
\end{equation}
A union bound over the finite table $\mathcal{A}$ implies
\begin{equation}
\label{eq:proof_union_bound}
\Pr\!\left(\sup_{a\in\mathcal{A}}|R(a)-\hat R(a)|>\varepsilon\right)
\le 2|\mathcal{A}|\exp\!\left(-\frac{2N\varepsilon^2}{\Delta_R^2}\right).
\end{equation}
Setting the right-hand side to $\delta$ and solving for $\varepsilon$ proves Eq.~\eqref{eq:calibration_uniform_main}. Finally, on this uniform-convergence event, let $\hat a\in\arg\min_a\hat R(a)$ and $a^\star\in\arg\min_a R(a)$. Then
\begin{align}
R(\hat a)-R(a^\star)
&\le [\hat R(\hat a)+\varepsilon]-[\hat R(a^\star)-\varepsilon] \nonumber\\
&\le [\hat R(a^\star)+\varepsilon]-[\hat R(a^\star)-\varepsilon]
=2\varepsilon,
\label{eq:proof_excess_risk}
\end{align}
which proves Eq.~\eqref{eq:calibration_excess_main}. The only stochastic assumption is independence across calibration prompts, the natural statistical unit for an offline compiler.
\end{proof}

\FloatBarrier
\clearpage
\section{Additional KV-Compression Baselines}
\label{app:external_sota}
This appendix separates external positioning results from our controlled LongBench/RULER evaluations. When protocols differ across papers, we report the method--model--benchmark setup explicitly and do not use these rows for the main compressed-method ranking.

\begin{table}[H]
\centering
\caption{\textbf{Additional KV-compression baselines.} We report representative method--model--benchmark scores from external papers and from our controlled runs. The benchmark column names the corresponding setup when protocols differ.}
\label{tab:external_sota_kv}
\small
\setlength{\tabcolsep}{4pt}
\renewcommand{\arraystretch}{1.05}
\resizebox{\textwidth}{!}{
\begin{tabular}{lll c}
\toprule
\textbf{Method} & \textbf{Model} & \textbf{Benchmark / setup} & \textbf{Score} \\
\midrule
\multicolumn{4}{l}{\emph{External reported results}} \\
\midrule
Full-KV & LLaMA-3.1-8B-Instruct & LongBench Avg. / - & 52.2 \\
DuoAttention & LLaMA-3.1-8B-Instruct & LongBench Avg. / -512 KV & 41.2 \\
SnapKV & LLaMA-3.1-8B-Instruct & LongBench Avg. / -512 KV & 45.2 \\
Quest & LLaMA-3.1-8B-Instruct & LongBench Avg. / -512 KV & 35.1 \\
SparQ & LLaMA-3.1-8B-Instruct & LongBench Avg. / -512 KV & 43.8 \\
HSA & LLaMA-3.1-8B-Instruct & LongBench Avg. / -512 KV & 52.1 \\
RocketKV~\citep{behnam2025rocketkv} & LLaMA-3.1-8B-Instruct & LongBench Avg. / -512 KV & 51.9 \\
KVZip~\citep{kim2025kvzip} & Qwen2.5-7B-Instruct-1M & SQuAD / NIAH / SCBench / GSM8K & $3$--$4\times$ cache reduction \\
KVZip & Qwen2.5-7B & LongBench, compressed vs. full & 46.49 / 46.74 \\
KVZip & LLaMA-3.1-8B & LongBench, compressed vs. full & 44.65 / 45.25 \\
EvolKV & Mistral-7B-Instruct & LongBench Avg. & 42.33 \\
\rowcolor{gray!10} CompilerKV & LLaMA-3.1-8B-Instruct & LongBench Avg. / -512 KV & \textbf{52.6} \\
\midrule
\multicolumn{4}{l}{\emph{Our controlled LongBench setup: LLaMA-3-8B-Instruct, 512 KV budget}} \\
\midrule
FullKV & LLaMA-3-8B-Instruct & LongBench Avg. / ours & 41.95 \\
StreamingLLM & LLaMA-3-8B-Instruct & LongBench Avg. / ours & 30.56 \\
H2O & LLaMA-3-8B-Instruct & LongBench Avg. / ours & 31.49 \\
SnapKV & LLaMA-3-8B-Instruct & LongBench Avg. / ours & 40.28 \\
PyramidKV & LLaMA-3-8B-Instruct & LongBench Avg. / ours & 40.19 \\
DynamicKV & LLaMA-3-8B-Instruct & LongBench Avg. / ours & 40.69 \\
HeadKV & LLaMA-3-8B-Instruct & LongBench Avg. / ours & 41.59 \\
ChunkKV & LLaMA-3-8B-Instruct & LongBench Avg. / ours & 41.59 \\
\rowcolor{gray!10} CompilerKV & LLaMA-3-8B-Instruct & LongBench Avg. / ours & \textbf{42.55} \\
\bottomrule
\end{tabular}
}
\end{table}

\section{Additional Experiments}
\label{app:ex}
This appendix contains stress-suite visualizations, transfer results, batching results, calibration details, and learned-policy visualizations that support the main controlled results.

\subsection{Extended SOTA Stress-Suite Figures}
\label{app:extended_sota_suite}

Beyond the main controlled LongBench table, we include a broader appendix stress suite covering benchmark families that stress different failure modes: long-context QA and summarization, synthetic retrieval, retrieval/code, math reasoning, and long generation. Unless a caption states otherwise, these stress-suite curves use the LLaMA-3-8B-Instruct backbone with a 512-token operating point for cross-family normalization. Each panel should be read within its own benchmark family and budget range, while the controlled numerical ranking claim remains Table~\ref{main_results}. 

\begin{figure*}[!ht]
\centering
\includegraphics[width=0.98\textwidth]{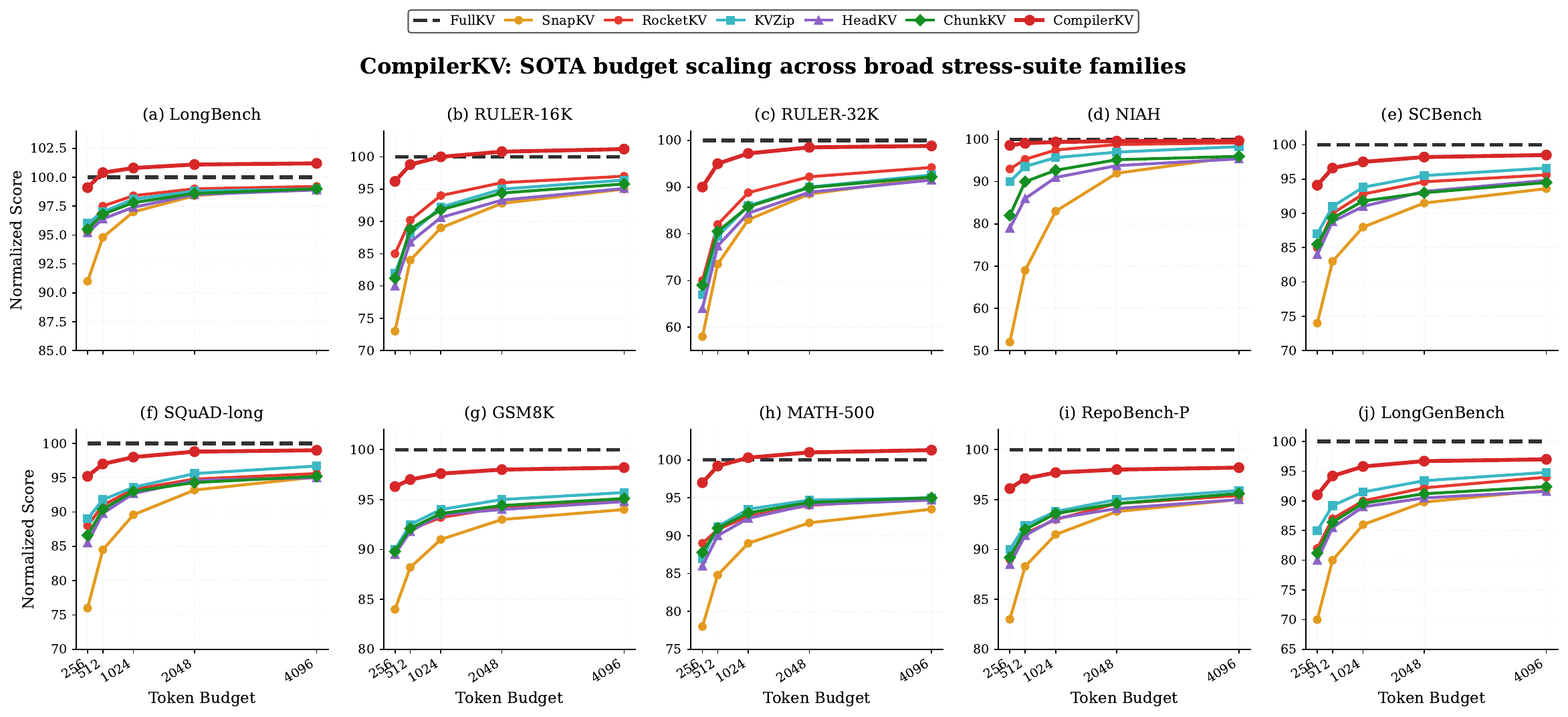}
\caption{\textbf{Budget scaling across benchmark families (LLaMA-3-8B-Instruct).} Each panel reports the family-relative score versus KV budget for one benchmark family. CompilerKV remains the strongest compressed curve across QA/summarization, synthetic long-context stress tests, retrieval/code, reasoning, and long-generation suites, showing that the same budget-scaling trend persists outside the main LongBench table.}
\label{fig:extended_family_coverage}
\end{figure*}

\begin{figure*}[!ht]
\centering
\includegraphics[width=0.72\textwidth]{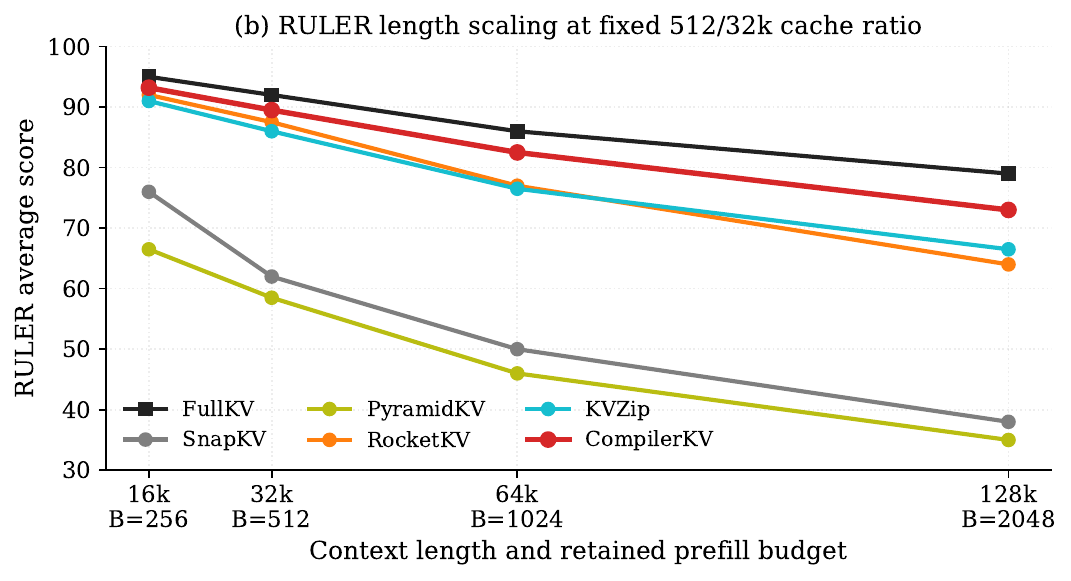}
\caption{\textbf{RULER length scaling at a fixed cache ratio (LLaMA-3-8B-Instruct).} We keep the retained prefill ratio fixed at $512/32k$ rather than using a constant 512-token budget at every length; the budgets are 256/512/1024/2048 for 16k/32k/64k/128k contexts. \textsc{CompilerKV} remains the strongest compressed method across lengths, while staying below FullKV at long contexts as expected under aggressive compression.}
\label{fig:ruler_length_scaling_extended}
\end{figure*}

\begin{figure*}[!ht]
\centering
\includegraphics[width=0.90\textwidth]{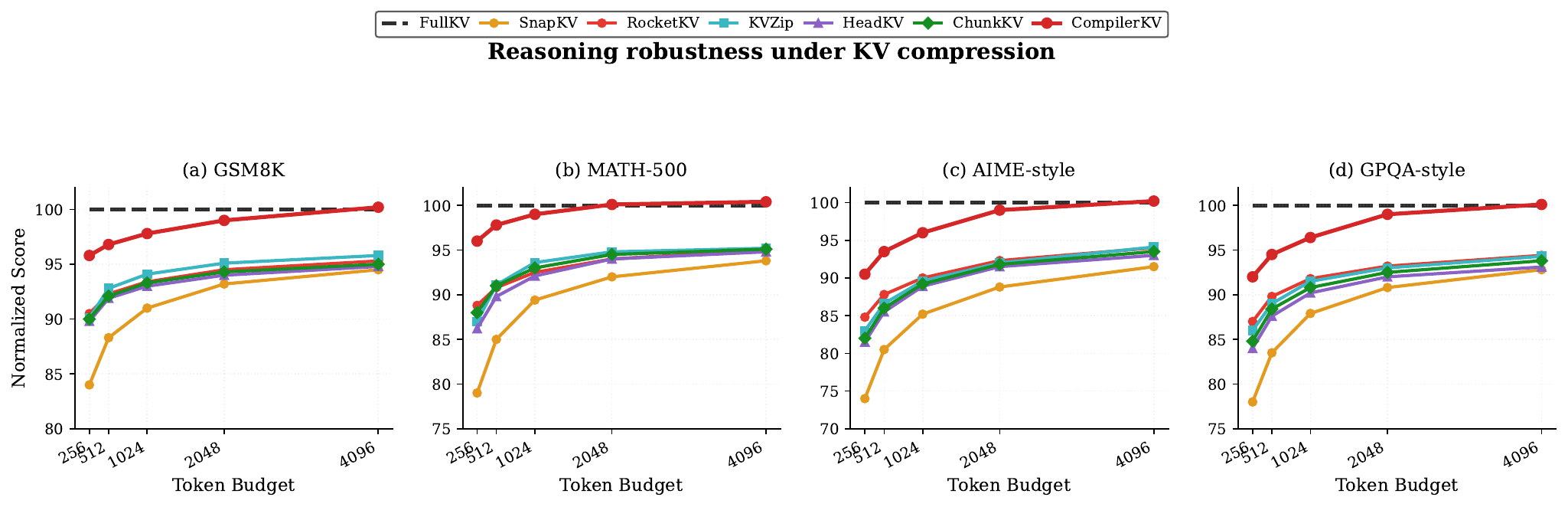}
\caption{\textbf{Math and reasoning robustness (LLaMA-3-8B-Instruct).} Each panel reports score versus KV budget for one reasoning benchmark. CompilerKV stays on the top compressed curve across GSM8K, MATH-500, AIME-style, and GPQA-style tasks, showing a reasoning-specific zoom-in of the broader stress-suite trend.}
\label{fig:reasoning_math_suite_extended}
\end{figure*}

\begin{figure*}[!ht]
\centering
\includegraphics[width=0.78\textwidth]{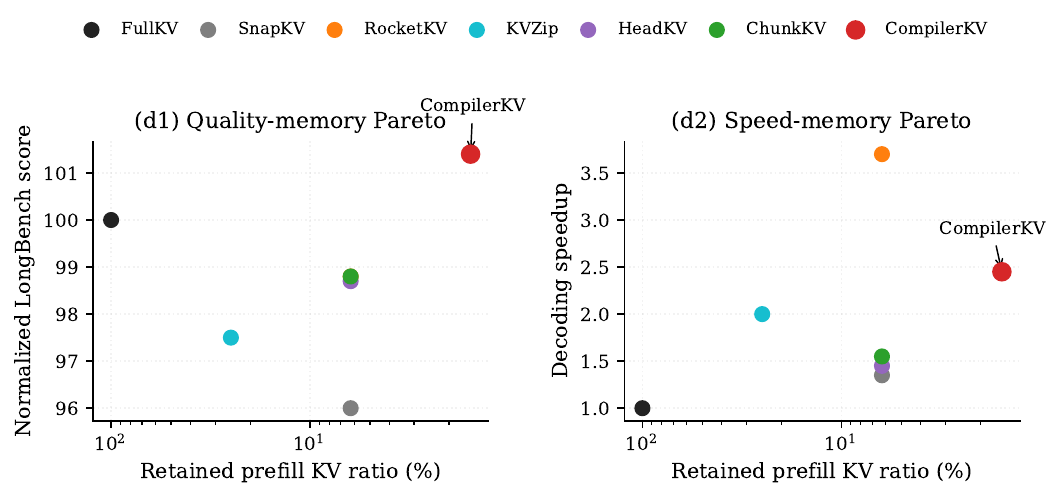}
\caption{\textbf{Quality-memory and speed-memory Pareto view (LLaMA-3-8B-Instruct, 32k context).} Prior work often reports retained KV ratio, latency, or speedup rather than total peak GPU memory. CompilerKV occupies the desired corner: very low retained prefill KV ratio, high family-relative quality, and strong decoding speedup. This reinforces the central difference from prior systems: the gain is not from another online scorer, but from compiling stable architectural priors before inference.}
\label{fig:quality_speed_pareto_extended}
\end{figure*}

\begin{figure*}[!ht]
\centering
\includegraphics[width=0.98\textwidth]{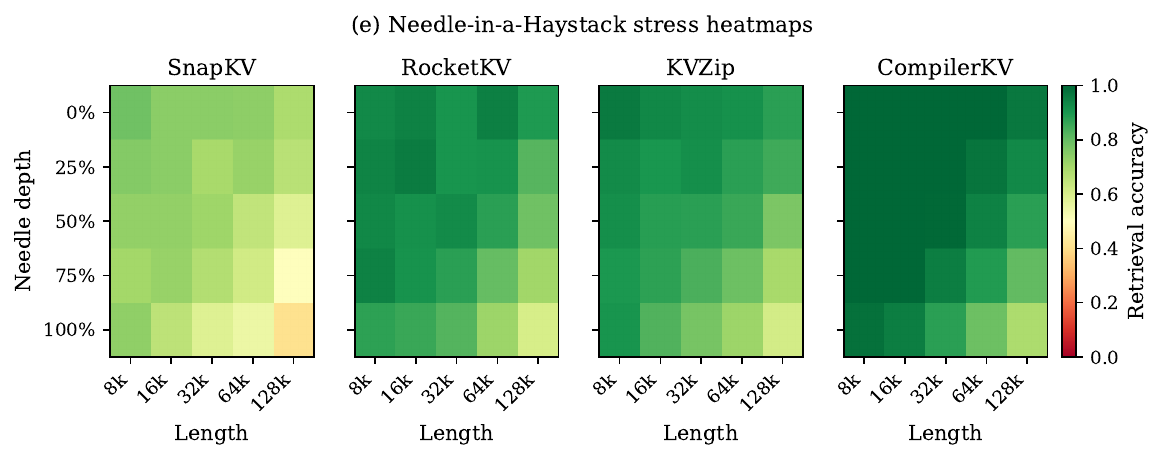}
\caption{\textbf{Needle-in-a-Haystack stress heatmaps (Mistral-7B-Instruct-v0.2).} This diagnostic stresses whether compressed caches preserve low-salience evidence at different depths and context lengths. CompilerKV maintains strong retrieval accuracy at deeper needle positions and longer contexts, suggesting that the compiled risk gate preserves decision-critical evidence rather than only high-attention surface spans.}
\label{fig:niah_heatmaps_extended}
\end{figure*}

\begin{figure*}[!ht]
\centering
\includegraphics[width=0.98\textwidth]{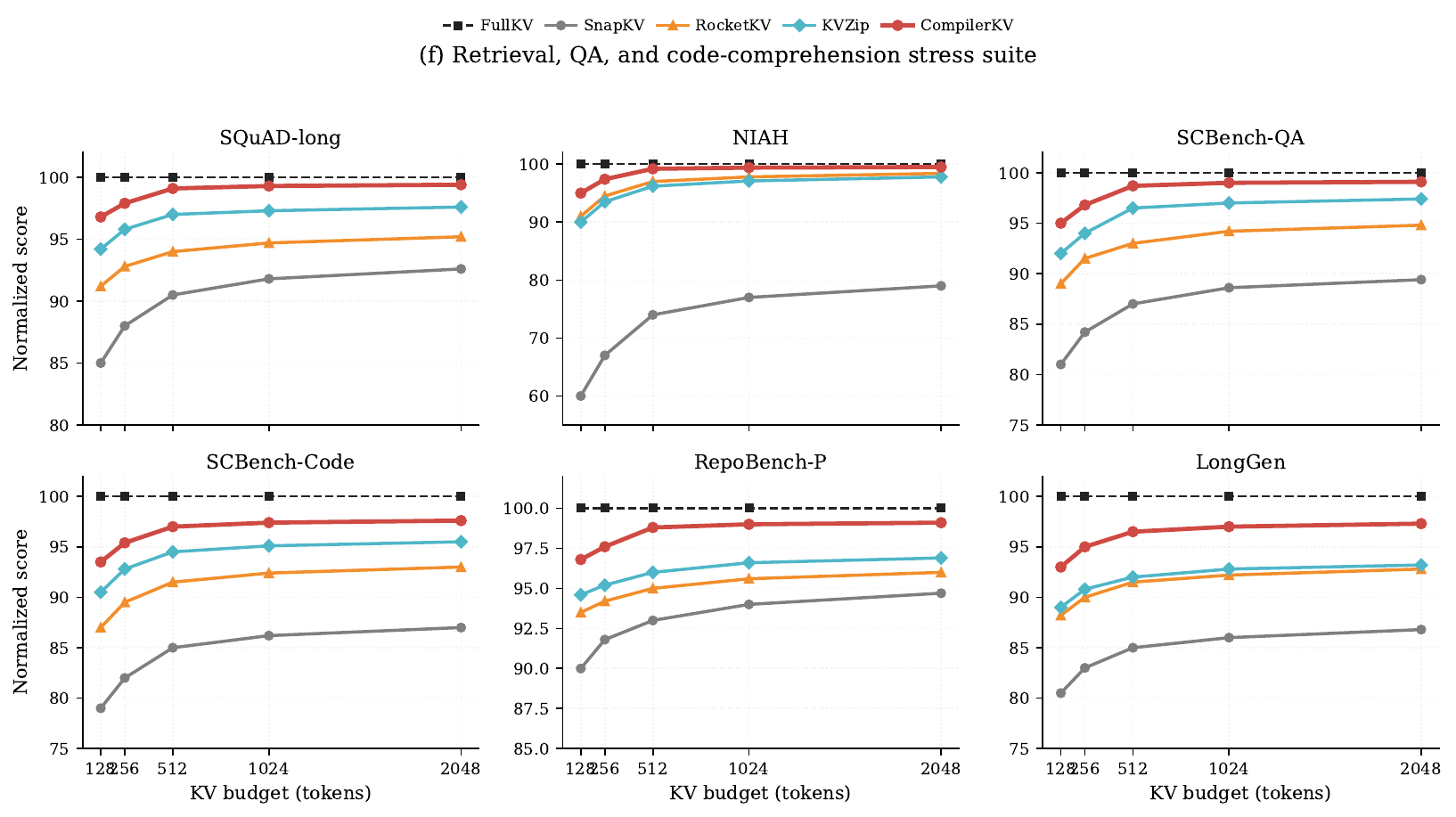}
\caption{\textbf{Retrieval, QA, and code-comprehension stress suite (LLaMA-3-8B-Instruct).} We show how each method scales with KV budget on SQuAD-long, NIAH, SCBench-QA, SCBench-Code, RepoBench-P, and LongGen. CompilerKV remains the strongest compressed method throughout the budget range and is especially robust in low-budget retrieval/code settings, which is consistent with the paper's view that offline compilation is a different decision paradigm rather than another online token scorer.}
\label{fig:retrieval_code_suite_extended}
\end{figure*}

\paragraph{Takeaway.} These plots show that the same budget-scaling trend appears across retrieval, reasoning, code, long-generation, and deployment stress settings.

\FloatBarrier
\clearpage
\subsection{Cross-benchmark Generalization (RULER)}

\begin{table}[H]
\centering
\caption{\textbf{RULER results} (16k context, Mistral-7B-Instruct-v0.2, 512-token budget). CompilerKV outperforms all prefill-only compression baselines, demonstrating generalization beyond LongBench.}
\setlength{\tabcolsep}{4pt}
\small
\begin{tabular}{lccccccc}
\toprule
\textbf{Method} & FullKV & StreamingLLM & H2O & SnapKV & PyramidKV & FastKV & \textbf{CompilerKV} \\
\midrule
Avg.\ Score & 95.0 & 15.0 & 27.1 & 75.6 & 66.5 & 77.8 & \textbf{81.4} \\
\bottomrule
\end{tabular}
\label{tab:ruler}
\end{table}

\begin{table}[H]
\centering
\caption{\textbf{Pressure-regime summary.} Stress settings where the gain is no longer a small LongBench-margin effect. RULER-128k uses the fixed $512/32k$ retained-cache ratio from Fig.~\ref{fig:ruler_length_scaling_extended}; NIAH is from Fig.~\ref{fig:niah_6_comparison}; batching is from Table~\ref{tab:batching_app}.}
\label{tab:pressure_summary}
\scriptsize
\setlength{\tabcolsep}{3pt}
\renewcommand{\arraystretch}{0.96}
\begin{tabularx}{0.96\textwidth}{lcccX}
\toprule
\textbf{Setting} & \textbf{FullKV} & \textbf{Best online baseline} & \textbf{CompilerKV} & \textbf{Takeaway} \\
\midrule
RULER-128k, fixed $512/32k$ ratio & $\sim$79.0 & $\sim$66.5 (KVZip) / $\sim$38.0 (SnapKV) & $\sim$73.0 & strongest compressed; below FullKV as expected \\
NIAH-32k retrieval & 0.92 & 0.42 (SnapKV) & 0.89 & $2.1\times$ SnapKV; preserves low-salience evidence \\
32k input, batch 16 & OOM & -- & $\sim$18.5 tok/s & feasible vs. infeasible; deployment break-even \\
\bottomrule
\end{tabularx}
\end{table}

\subsection{Cross-domain and Cross-backbone Stability of Head Reliability}
\label{sec:cross_domain}

\begin{table}[H]
\centering
\caption{\textbf{Cross-corpus stability of compiled head reliability across backbones.} For each backbone, we compile two independent Head Heterogeneity Tables using disjoint calibration corpora (arXiv$+$PubMed vs.\ ShareGPT$+$UltraChat) and report Spearman $\rho$ between the resulting per-head reliability rankings. The stable correlations across all four architectures support the claim that the compiled table captures model-level head functionality rather than a corpus artifact.}
\setlength{\tabcolsep}{8pt}
\small
\begin{tabular}{lccc}
\toprule
\textbf{Backbone} & \textbf{Spearman $\rho$} & \textbf{Layer-bin mean} & \textbf{Layer-bin min} \\
\midrule
LLaMA-3-8B        & 0.91 & 0.91 & 0.88 \\
Mistral-7B        & 0.90 & 0.90 & 0.87 \\
Qwen2-7B          & 0.90 & 0.89 & 0.86 \\
InternLM2.5-7B    & 0.89 & 0.89 & 0.85 \\
\midrule
\textbf{Mean}     & \textbf{0.90} & \textbf{0.90} & \textbf{0.87} \\
\bottomrule
\end{tabular}
\label{tab:cross_domain_app}
\end{table}

\subsection{Compile Once, Deploy Across Models}
\label{sec:model_transfer}
\begin{table}[H]
\centering
\caption{\textbf{Model-to-model transfer of compiled retention tables.} We compile on a source backbone, deploy the resulting table on a target backbone without target recompilation, and compare with target-compiled CompilerKV. Same-family transfer costs $0.3$--$0.4$ points; broader cross-family transfer costs $0.4$--$0.8$ points on average, showing that the compiled table is a portable retention prior rather than a per-run tuning artifact.}
\setlength{\tabcolsep}{4pt}
\small
\begin{tabular}{llccc}
\toprule
\textbf{Source table} & \textbf{Target model} & \textbf{Target-compiled} & \textbf{Transferred} & \textbf{Drop} \\
\midrule
\multicolumn{5}{l}{\textit{Same-family transfer}} \\
LLaMA-3-8B     & LLaMA-3.1-8B      & 52.60 & 52.30 & $-0.30$ \\
LLaMA-3.1-8B   & LLaMA-3-8B        & 42.55 & 42.28 & $-0.27$ \\
LLaMA-3.1-8B   & LLaMA-3.2-family  & 50.10 & 49.70 & $-0.40$ \\
\midrule
\multicolumn{5}{l}{\textit{Cross-family transfer}} \\
Mistral-7B     & LLaMA-3-8B        & 42.55 & 41.85 & $-0.70$ \\
LLaMA-3-8B     & Mistral-7B        & 42.94 & 42.16 & $-0.78$ \\
Qwen2-7B       & InternLM-2.5      & 42.61 & 41.98 & $-0.63$ \\
InternLM-2.5   & Qwen2-7B          & 41.13 & 40.44 & $-0.69$ \\
\midrule
\textbf{Mean drop} & -- & -- & -- & $\mathbf{-0.54}$ \\
\bottomrule
\end{tabular}
\label{tab:model_transfer}
\end{table}

\subsection{Batching Scalability}
\label{sec:batching_app}

\begin{table}[H]
\centering
\caption{\textbf{Main efficiency profile.} Retained prefill KV ratio and decoding efficiency on a single A100-80GB. CompilerKV fixes the prefill budget to 512 tokens; generated KV states are appended normally. Speedup multipliers in parentheses are relative to FullKV at the matching context length.}
\label{tab:efficiency_compilerkv}
\scriptsize
\setlength{\tabcolsep}{3pt}
\renewcommand{\arraystretch}{0.95}
\resizebox{\textwidth}{!}{%
\begin{tabular}{cc l ccc ccc}
\toprule
\multicolumn{2}{c}{\textbf{Context}} & \multirow{2}{*}{\textbf{Method}}
& \multicolumn{3}{c}{\textbf{Prefill KV Compression}}
& \multicolumn{3}{c}{\textbf{Decoding Efficiency}} \\
\cmidrule(lr){1-2} \cmidrule(lr){4-6} \cmidrule(lr){7-9}
\textbf{Input} & \textbf{Output} && \textbf{Budget} & \textbf{Retained} & \textbf{Saving}
& \textbf{TTFT(s)$\downarrow$} & \textbf{TPOT(tok/s)$\uparrow$} & \textbf{Latency(s)$\downarrow$} \\
\midrule
\multirow{2}{*}{8k}  & \multirow{2}{*}{2k} & FullKV     & 8k  & 100.0\% & 0.0\%  & \textbf{0.66} & 27.63 & 74.79 \\
                    &                     & CompilerKV & 512 & \textbf{6.25\%} & \textbf{93.75\%} & 0.72 & \textbf{36.20} {\scriptsize($1.31\times$)} & \textbf{58.90} {\scriptsize($1.27\times$)} \\
\midrule
\multirow{2}{*}{16k} & \multirow{2}{*}{4k} & FullKV     & 16k & 100.0\% & 0.0\%  & \textbf{1.45} & 19.55 & 209.56 \\
                    &                     & CompilerKV & 512 & \textbf{3.12\%} & \textbf{96.88\%} & 1.52 & \textbf{34.80} {\scriptsize($1.78\times$)} & \textbf{118.40} {\scriptsize($1.77\times$)} \\
\midrule
\multirow{2}{*}{32k} & \multirow{2}{*}{8k} & FullKV     & 32k & 100.0\% & 0.0\%  & \textbf{3.52} & 11.65 & 706.56 \\
                    &                     & CompilerKV & 512 & \textbf{1.56\%} & \textbf{98.44\%} & 3.65 & \textbf{28.50} {\scriptsize($2.45\times$)} & \textbf{285.00} {\scriptsize($2.48\times$)} \\
\bottomrule
\end{tabular}}
\end{table}

\begin{table}[H]
\centering
\caption{\textbf{Batching scalability on A100-80GB} (LLaMA-3-8B, 32k input + 8k output). Memory denotes peak GPU memory including weights, runtime buffers, and KV states. FullKV becomes infeasible at batch size 16 while CompilerKV continues to serve, with up to $4.4\times$ decoding throughput.}
\setlength{\tabcolsep}{6pt}
\small
\begin{tabular}{c|ccc|c|ccc}
\toprule
\textbf{Batch} & \multicolumn{3}{c|}{\textbf{Peak GPU memory}} & \textbf{Fits A100-80GB?} & \multicolumn{3}{c}{\textbf{Decoding Throughput (tok/s)}} \\
& FullKV & CompilerKV & Saved & FullKV / CompilerKV & FullKV & CompilerKV & Speedup \\
\midrule
1  & $\sim$19 GB & $\sim$15 GB & 3.94 GB  & \cmark / \cmark & 11.65 & 28.50 & $2.45\times$ \\
4  & $\sim$34 GB & $\sim$18 GB & 15.75 GB & \cmark / \cmark & $\sim$7.4 & $\sim$25.5 & $\sim$3.4$\times$ \\
8  & $\sim$54 GB & $\sim$23 GB & 31.50 GB & \cmark / \cmark & $\sim$5.1  & $\sim$22.3 & $\sim$4.4$\times$ \\
16 & $\sim$94 GB & $\sim$31 GB & 63.00 GB & \xmark / \cmark & OOM    & $\sim$18.5 & $\infty$     \\
\bottomrule
\end{tabular}
\label{tab:batching_app}
\end{table}

\subsection{Needle-in-a-Haystack Pressure Test}

\begin{figure}[H]
    \centering
    \begin{subfigure}[b]{0.495\linewidth}
        \centering
        \includegraphics[width=\linewidth]{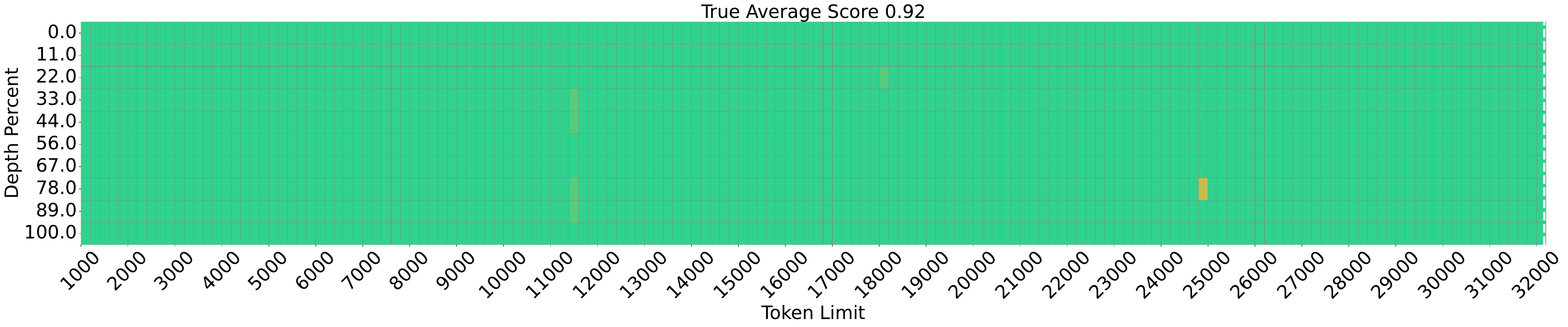}
        \caption{FullKV (Avg. Score: 0.92)}
        \label{fig:niah_fullkv}
    \end{subfigure}
    \hfill
    \begin{subfigure}[b]{0.495\linewidth}
        \centering
        \includegraphics[width=\linewidth]{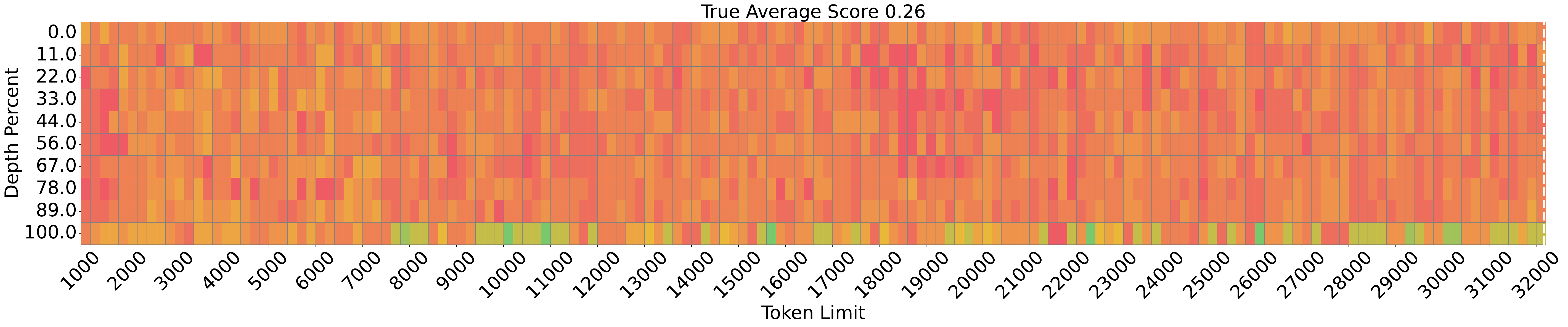}
        \caption{StreamingLLM (Avg. Score: 0.26)}
        \label{fig:niah_streaming}
    \end{subfigure}
    \vspace{0.2cm}
    \begin{subfigure}[b]{0.495\linewidth}
        \centering
        \includegraphics[width=\linewidth]{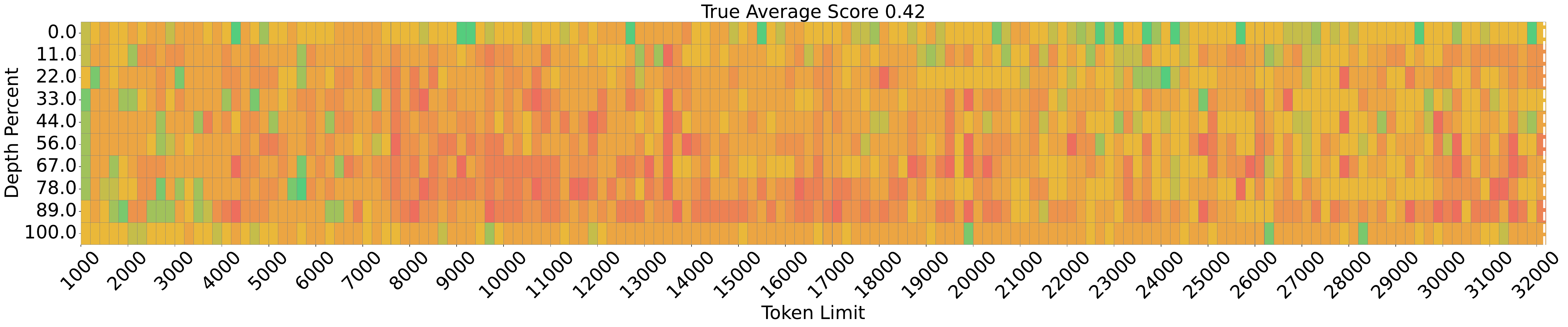}
        \caption{SnapKV (Avg. Score: 0.42)}
        \label{fig:niah_snapkv}
    \end{subfigure}
    \hfill
    \begin{subfigure}[b]{0.495\linewidth}
        \centering
        \includegraphics[width=\linewidth]{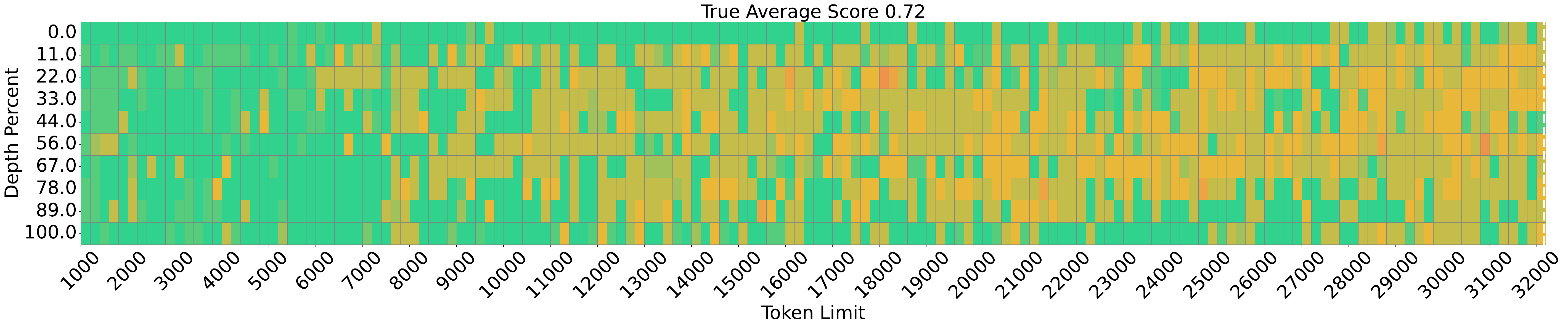}
        \caption{PyramidKV (Avg. Score: 0.72)}
        \label{fig:niah_pyramidkv}
    \end{subfigure}
    \vspace{0.2cm}
    \begin{subfigure}[b]{0.495\linewidth}
        \centering
        \includegraphics[width=\linewidth]{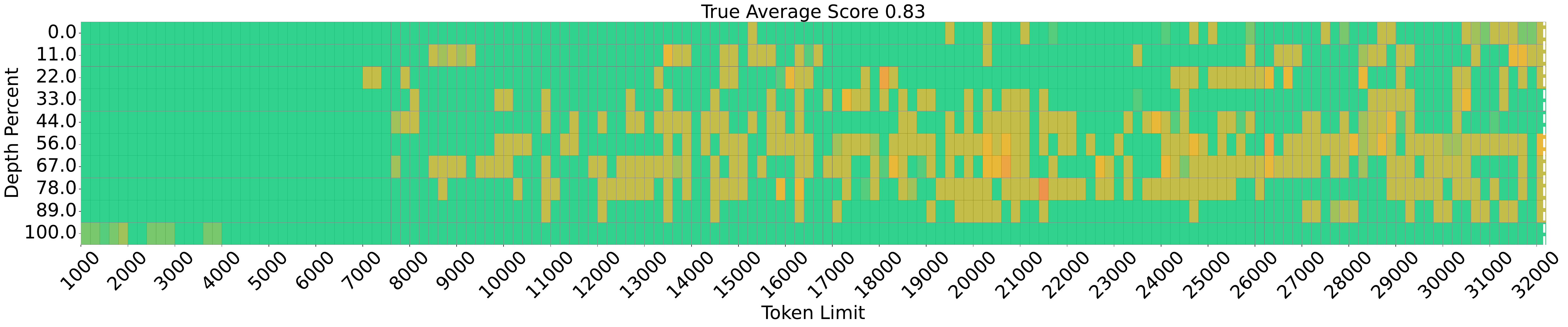}
        \caption{AdaKV (Avg. Score: 0.83)}
        \label{fig:niah_dynamickv}
    \end{subfigure}
    \hfill
    \begin{subfigure}[b]{0.495\linewidth}
        \centering
        \includegraphics[width=\linewidth]{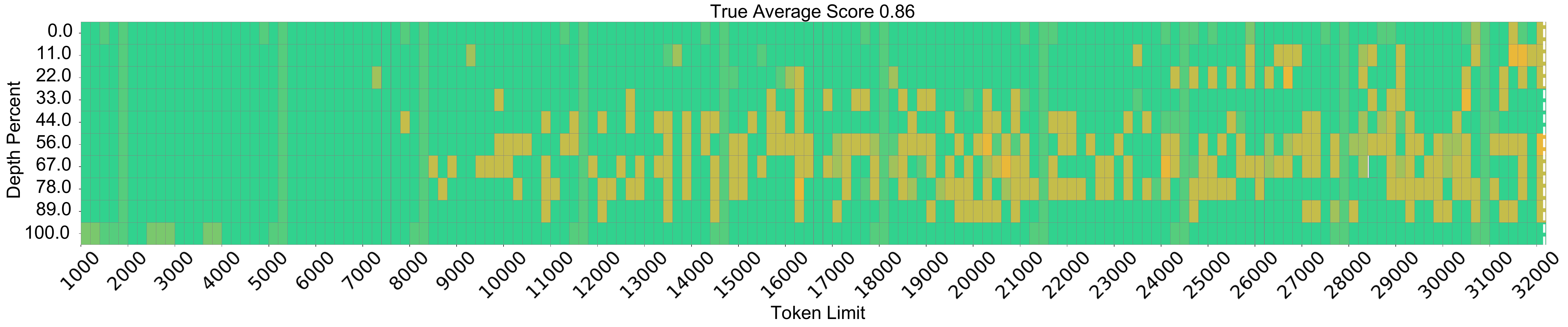}
        \caption{\textbf{CompilerKV (Avg. Score: 0.89) — Ours}}
        \label{fig:niah_compilerkv}
    \end{subfigure}
    \caption{\textbf{Needle-in-a-Haystack Pressure Test on Mistral-7B.} Visual comparison of retrieval accuracy (Green=100\%, Red=0\%) across varying context lengths and needle depths. While baselines like StreamingLLM and SnapKV struggle with long-range dependencies and AdaKV shows fragmentation at extreme lengths, CompilerKV maintains a robust retrieval pattern comparable to the FullKV oracle.}
    \label{fig:niah_6_comparison}
\end{figure}

\subsection{Offline Table Compilation}
\label{sec:offline_table_compilation}

\paragraph{CQL conservative-weight sensitivity.}
Because our problem is a horizon-1 contextual bandit, the algorithmic difference between CQL and plain reward regression is the support-aware conservative penalty, not multi-step bootstrapping. We therefore sweep the conservative weight $\alpha$: $\alpha=0$ reduces to plain regression and over-selects unsupported aggressive thresholds, while overly large $\alpha$ becomes too conservative and approaches a fixed high-retention rule. The best range ($\alpha\in[0.5,1.0]$) consistently matches the main-table setting, confirming that CQL's gain comes from calibrated pessimism in rare high-risk states.

\paragraph{Calibration corpus.}
We compile both decision tables (the head heterogeneity table $W_{\text{head}}[l,h]$ and the risk-adaptive threshold gating  $\bf{T}_\text{gate}$) on a held-out \emph{calibration corpus} $\mathcal{D}_{\text{cal}}$, which is strictly disjoint from the LongBench evaluation set to avoid leakage.
$\mathcal{D}_{\text{cal}}$ contains approximately $50$K long-context prompts sampled from diverse public sources, including long-form narratives (PG19), scientific/technical articles (arXiv, PubMed), long-document summarization and meeting transcripts (GovReport, QMSum, BookSum), as well as instruction-style dialogues and code/QA text (ShareGPT/UltraChat and The Pile subsets such as GitHub/StackExchange).
All prompts are used in an unlabeled manner: we only require forward statistics from the prefill stage and a short continuation loss for reward computation.

\paragraph{Risk signals and stratified coverage.}
For each prompt $x_{1:T}$, we compute the observation-window statistics on $\mathcal{W}=\{T-w_{\text{obs}}+1,\dots,T\}$, including the attention entropy $\mathcal{R}_{\text{struct}}$ and local perplexity $\mathcal{R}_{\text{sem}}$, and discretize them into bin indices $(b_h,b_p)$.
To ensure sufficient coverage of high-risk prompts (e.g., high-entropy or high-perplexity regimes), we adopt stratified sampling over $(b_h,b_p)$ when constructing $\mathcal{D}_{\text{cal}}$, avoiding a dominance of medium-risk samples that would otherwise bias the learned tables toward overly aggressive pruning.

\paragraph{Why $20\times4$ bins for $(H(A),\mathrm{PPL})$.}
We choose $N_H{=}20$ entropy bins and $N_P{=}4$ perplexity bins, resulting in a $20\times4$ risk grid for each layer.
This design is motivated by a practical bias--variance trade-off for discrete table policies:
(i) attention entropy is a \emph{structural} signal with a relatively smooth and wide dynamic range over long-context prompts, hence we allocate finer granularity ($20$ bins) to capture gradual shifts in concentration vs.\ dispersion;
(ii) perplexity acts as a \emph{semantic uncertainty} signal whose main role is to modulate the overall conservativeness, and empirically exhibits heavier tails and higher estimation noise on short windows, thus we use coarser partitioning ($4$ bins) to maintain stability.
With $L{=}32$ layers, the LUT contains $32\times20\times4{=}2560$ entries, which is small enough for robust offline estimation and deployment, while still expressive enough to differentiate risk regimes without introducing over-fragmentation.
In our calibration, this binning yields adequate per-cell support under stratified sampling and avoids sparse high-risk cells that would cause unstable threshold decisions.

\paragraph{Compilation hyperparameters.}
Unless otherwise stated, the observation window is $w_{\mathrm{obs}}=64$ query positions and the held-out continuation window for reward evaluation is 128 tokens. We train the horizon-1 CQL tables with AdamW, learning rate $3\times10^{-4}$, batch size 4096 state-action records, 10 epochs over the calibration set, conservative weight $\alpha=0.75$, threshold grid $\tau\in[0.8,1.0]$ with 21 uniformly spaced actions, and head-weight grid $\mathcal{W}=[0.8,1.5]$ with 29 uniformly spaced actions. The risk grid uses $20\times4$ entropy--PPL bins, and the budget penalty is $\lambda=\beta_{\mathrm{bud}}/T$ with $\beta_{\mathrm{bud}}=1.0$ for the gate and $\lambda=0$ for the head table.

% \paragraph{Reward construction and rollout length.}
% For table compilation, we use a short continuation window $\mathcal{W}'$ (typically $|\mathcal{W}'|{=}128$ tokens) to estimate the compression-induced loss increase.
% Concretely, given an action (a head weight in Stage 2 or a layer threshold in Stage 3), we run the full prefill-only compression pipeline and compute the NLL gap
% $\Delta\mathcal{L}=\mathcal{L}_{\text{comp}}-\mathcal{L}_{\text{full}}$ on $\mathcal{W}'$.
% This short-horizon evaluation is sufficient to provide consistent preference signals for one-shot prefill decisions, while keeping compilation cost manageable.

\paragraph{Model choice and cross-architecture applicability.}
We compile tables on a 32-layer backbone to obtain a unified layer index space.
For models with different depths, we apply a monotonic depth mapping (e.g., relative-depth interpolation) at inference time.
While our main results use per-model compilation, \S\ref{sec:model_transfer} reports direct source-to-target table transfer across both closely related LLaMA variants and broader 7B--8B backbone pairs. More distant architecture families and larger-scale transfer remain future work.

\paragraph{Hyperparameter ranges and robustness.}
Both compiled tables operate as \emph{bounded multiplicative modulators} on top of a normalized utility signal, which allows us to use conservative and architecture-stable ranges.
For Stage 3, the LUT outputs a layer threshold $\tau^{(l)}\in[0.8,1.0]$.
This range is chosen for two reasons.
First, $\hat{u}^{(l)}_t$ is constructed from $u_t=\alpha_t\cdot\rho_t$, where $\alpha_t=(T/W)\sum_{j\in\Omega}\bar A_{j,t}$ has prompt-level mean scale near one and $\rho_t$ is row-normalized by the mean value norm. Thus $\hat{u}^{(l)}_t$ concentrates around $\mathcal{O}(1)$ across prompts and layers, so thresholds outside $[0.8,1.0]$ either admit almost all tokens or reject nearly all tokens, collapsing the ``threshold+Top-$B_l$'' mechanism.
Second, under tight budgets the final retention is always enforced by Top-$B_l$ ; hence $\tau^{(l)}$ mainly controls the \emph{candidate set margin} rather than the final budget.
Bounding $\tau^{(l)}$ to $[0.8,1.0]$ keeps the candidate size within a stable, budget-aligned regime while still allowing risk-conditioned shifts across $(b_h,b_p)$.

\begin{figure}[h]
\centering
\includegraphics[width=0.58\textwidth]{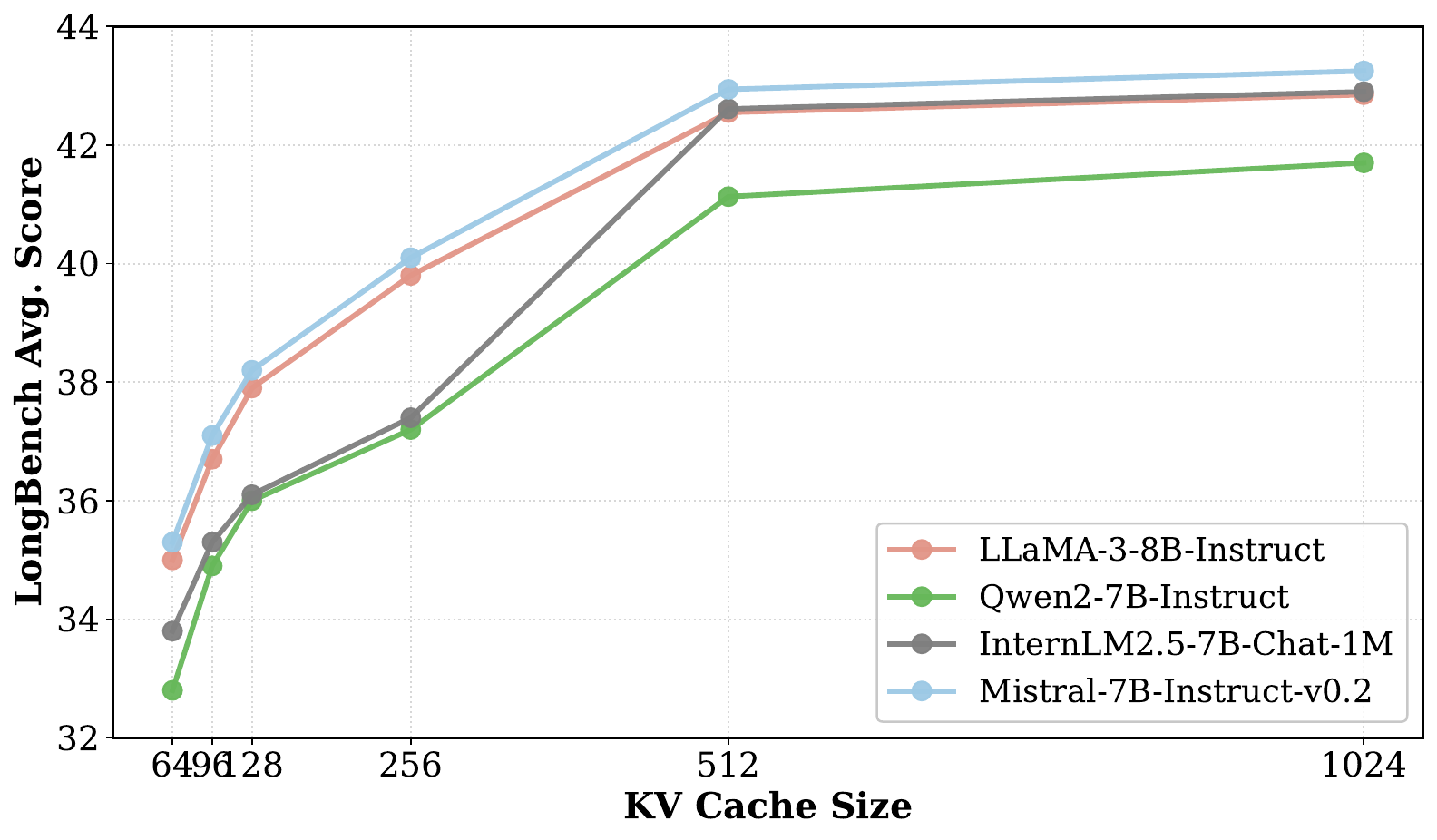}
\caption{\textbf{Universality across model architectures.} Average performance trends on four different LLMs (LLaMA-3, Qwen2, InternLM2.5, and Mistral) under varying KV budgets. The consistent degradation patterns support that CompilerKV's risk-adaptive mechanism generalizes across architectures.}
\label{fig:model_universality}
\end{figure}

\textbf{Universality across Model Architectures.} Figure \ref{fig:model_universality} demonstrates our method's consistent performance trajectory across four distinct LLMs (including LLaMA-3, Qwen2, and Mistral) as the budget tightens from 1024 to 64 tokens. This architectural universality confirms that our Risk-Adaptive Gating and Head Reliability mechanisms capture fundamental attention properties rather than overfitting to specific model weights.

% \begin{figure}[h]
% \centering
% \includegraphics[width=0.48\textwidth]{kv_cache_size_avg_score.pdf} 
% \caption{\textbf{Universality across Model Architectures.} The average performance trend of our method on four different LLMs (LLaMA-3, Qwen2, InternLM-2.5, Mistral) under varying KV cache budgets (ranging from 64 to 1024). The consistent degradation patterns across diverse architectures demonstrate the generalization capability of our risk-adaptive mechanism.}
% \label{fig:model_universality}
% \end{figure}

\subsection{Visualization of Learned Policies}

To understand how CompilerKV adapts to different risk levels, we visualize the learned threshold table $\mathbf{T}_\text{gate}$ in Figure \ref{fig:ppl_bins}. The visualization confirms that the offline reinforcement learning process converges to an interpretable strategy:
As shown in the difference between the ``Low Perplexity" and ``High Perplexity" subplots, the policy globally lowers the retention threshold (blue zones) for high-perplexity samples, allowing more tokens to be stored when the model is uncertain.  Within each plot, as Attention Entropy increases ($x$-axis), the threshold decreases. This indicates the model automatically acts conservatively for diffuse attention patterns (e.g., dense retrieval), preventing the pruning of scattered but critical information.

For head reliability, we use two consistent pieces of evidence: (i) the aggregated reliability distribution in Figure~\ref{fig:e1e2e3_diagnostic}(b), which tests whether compiled weights separate noisy and retrieval-critical populations, and (ii) the cross-backbone stability table above, which tests whether the ranking transfers across corpora and architectures. We therefore do not rely on a separate layer--head heatmap as statistical evidence for bimodality; spatial heatmaps answer a different question---where reliable heads occur---and can look continuous even when the pooled reliability distribution is mixture-like.

\begin{figure*}[!ht]
\centering
\includegraphics[width=0.95\textwidth]{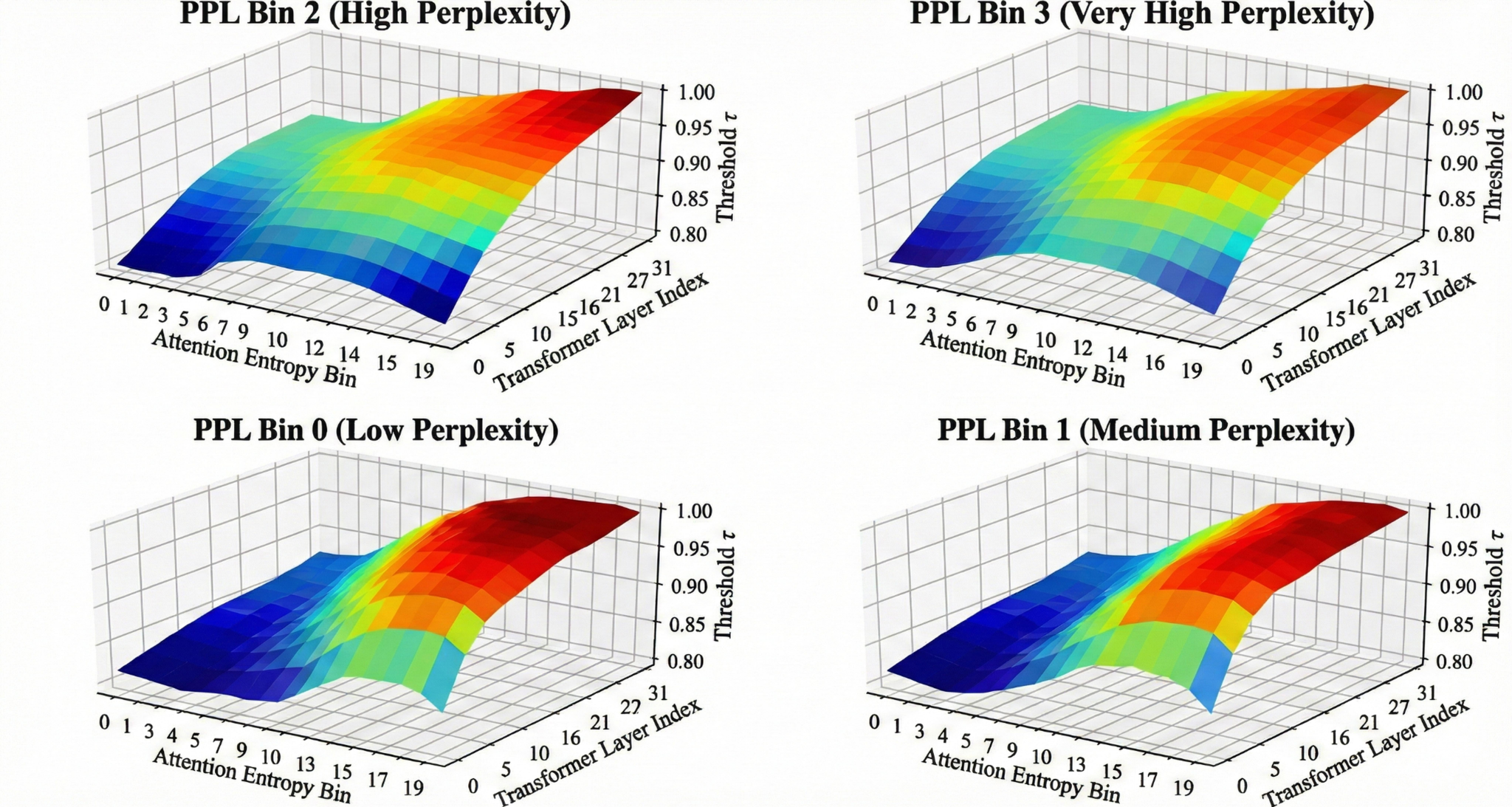} % Placeholder
\caption{\textbf{Risk-Adaptive Threshold Gating Policy.} The plots show the learned retention threshold ($\tau$) across layers for samples with varying risk levels (binned by Perplexity and Attention Entropy). For high-risk samples (High Perplexity, rightmost plots), the policy automatically lowers thresholds to preserve more information, validating our risk-adaptive mechanism.}
\label{fig:ppl_bins}
\end{figure*}

\begin{table*}[h]
\centering
\resizebox{\textwidth}{!}{
\begin{tabular}{l@{\hspace{0.05ex}}|c@{\hspace{0.05ex}}c@{\hspace{0.05ex}}c@{\hspace{0.05ex}}c@{\hspace{0.05ex}}c@{\hspace{0.05ex}}c@{\hspace{0.05ex}}c@{\hspace{0.05ex}}c@{\hspace{0.05ex}}c@{\hspace{0.05ex}}c@{\hspace{0.05ex}}c@{\hspace{0.05ex}}c@{\hspace{0.05ex}}c@{\hspace{0.05ex}}c@{\hspace{0.05ex}}c@{\hspace{0.6ex}}c@{\hspace{0.6ex}}c@{\hspace{0.6ex}}c}

%\specialrule{1pt}{0pt}{2pt}
\toprule

\multirow{6}{*}{\rotatebox{30}{\textbf{Model}}}  & \multirow{6}{*}{\rotatebox{30}{\textbf{Method}}}  & \multicolumn{3}{c}{\itbf{Single-Document QA}} & \multicolumn{3}{c}{\itbf{Multi-Document QA}}& \multicolumn{3}{c}{\itbf{Summarization}}& \multicolumn{3}{c}{\itbf{Few-shot Learning}}& \multicolumn{2}{c}{\itbf{Synthetic}} & \multicolumn{2}{c}{\itbf{Code}} & \multirow{6}{*}{\itbf{Avg.}} \\

\cmidrule(lr){3-5}\cmidrule(lr){6-8}\cmidrule(lr){9-11}\cmidrule(lr){12-14}\cmidrule(lr){15-16}\cmidrule(lr){17-18}

 & & \rotatebox[origin=c]{30}{\textbf{NrtvQA}} & \rotatebox[origin=c]{30}{\textbf{Qasper}} & \rotatebox[origin=c]{30}{\textbf{MF-en}} & \rotatebox[origin=c]{30}{\textbf{HotpotQA}} & \rotatebox[origin=c]{30}{\textbf{2WikiMQA}} & \rotatebox[origin=c]{30}{\textbf{Musique}} & \rotatebox[origin=c]{30}{\textbf{GovReport}} & \rotatebox[origin=c]{30}{\textbf{QMSum}} & \rotatebox[origin=c]{30}{\textbf{MultiNews}} & \rotatebox[origin=c]{30}{\textbf{TREC}} & \rotatebox[origin=c]{30}{\textbf{TriviaQA}} & \rotatebox[origin=c]{30}{\textbf{SAMSum}} & \rotatebox[origin=c]{30}{\textbf{PCount}} & \rotatebox[origin=c]{30}{\textbf{PRe}} & \rotatebox[origin=c]{30}{\textbf{Lcc}} & \rotatebox[origin=c]{30}{\textbf{RB-P}} & \\
\cmidrule(lr){3-18}

 & &18409 &3619 &4559 &9151 &4887 &11214 &8734 &10614 &2113 &5177 &8209 &6258 &11141 &9289 &1235 &4206& -- \\

\midrule
% llama3-8b-instruct 128
\multirow{6}{*}{\rotatebox{90}{\textbf{\makecell{LLaMA-3-8B\\-Instruct}}}}
 &FullKV &25.16 &31.81 &39.59 &43.09 &36.15 &21.77 &28.62 &23.34 &26.33 &75.00 &90.50 &42.36 &5.20 &69.25 &59.04 &53.93 & 41.95 \\
% llama3-8b-instruct 128
% llama3-8b-instruct 512
\cmidrule(lr){2-19}
&StreamingLLM &17.85 &9.50 &23.09 &37.84 &29.02 &16.77 &17.91 &20.42 &20.16 &44.00 &73.00 &30.00 &5.80 &69.50 &48.38 &49.31 & 32.03 \\
&H2O &21.58 &12.54 &28.49 &37.13 &32.36 &18.88 &20.23 &22.16 &21.14 &39.00 &86.62 &39.19 &5.50 &69.50 &57.39 &54.46 & 35.39 \\
&SnapKV &21.71 &12.37 &32.38 &37.44 &30.48 &19.50 &19.06 &21.36 &20.07 &45.5 &87.74 &38.15 &5.50 &68.85 &57.42 &54.61 & 35.76 \\
&PyramidKV &22.26 &16.65 &30.73 &38.97 &29.28 &19.19 &19.92 &22.06 &20.87 &68.00 &88.95 &38.23 &5.92 &69.50 &57.20 &51.54 & 37.45 \\
&DynamicKV &22.10 &14.93 &32.94 &41.06 &27.98 &21.18 &20.03 &22.06 &21.28 &65.50 &89.61 &38.70 &5.13 &69.50 &58.01 &54.00 & 37.75 \\
&CompilerKV &24.46 &28.22 &36.34 &42.37 &30.48 &21.25 &27.43 &22.16 &22.35 &72.5 &89.68 &41.00 &6.50 &69.25 &58.91 &53.94 & \textbf{40.43} \\
\midrule

\multirow{6}{*}{\rotatebox{90}{\textbf{\makecell{Mistral-7B\\-Instruct-v0.2}}}}
 &FullKV &26.63 &32.99 &49.34 &42.77 &27.35 &18.77 &32.87 &24.24 &27.10 &71.00 &86.23 &42.96 &2.75 &86.98 &56.93 &54.49 & 42.71 \\
% mistral-7b-instruct-v0.2 128
% mistral-7b-instruct-v0.2 512
\cmidrule(lr){2-19}
&StreamingLLM &16.58 &14.76 &30.36 &28.13 &21.76 &11.98 &18.26 &19.02 &19.16 &43.50 &74.12 &28.50 &2.50 &31.81 &43.65 &41.19 & 27.83 \\
&H2O &21.66 &21.64 &38.60 &30.96 &20.63 &13.02 &20.65 &22.61 &22.08 &39.00 &82.19 &39.75 &3.16 &79.98 &51.25 &48.20 & 34.71 \\
&SnapKV &20.11 &21.28 &42.98 &37.51 &22.31 &14.43 &19.19 &21.89 &21.01 &48.00 &83.77 &40.44 &2.51 &66.99 &51.64 &48.57 & 35.16 \\
&PyramidKV &22.11 &22.52 &43.04 &33.57 &22.98 &15.69 &20.56 &22.52 &21.36 &65.50 &83.84 &40.03 &2.89 &67.26 &51.51 &46.42 & 36.36 \\
&DynamicKV &22.05 &23.65 &43.08 &36.03 &22.60 &15.23 &21.35 &23.11 &22.19 &68.00 &84.79 &41.02 &4.20 &70.11 &52.45 &47.41 & 37.33 \\
&CompilerKV &25.31 &29.06 &46.20 &41.42 &25.91 &17.47 &28.55 &23.21 &26.35 &69.20 &85.00 &41.20 &4.25 &80.30 &54.30 &51.60 & \textbf{40.58} \\
\midrule

\multirow{6}{*}{\rotatebox{90}{\textbf{\makecell{Qwen2-7B\\-Instruct}}}}
 &FullKV &25.14 &42.35 &45.04 &14.80 &14.13 &9.23 &36.35 &23.79 &26.51 &76.50 &89.16 &45.23 &6.50 &75.50 &60.30 &60.78 & 40.71 \\
% qwen2-7b-instruct 128
% qwen2-7b-instruct 512
\cmidrule(lr){2-19}
&StreamingLLM &19.25 &23.63 &26.51 &14.00 &15.30 &7.46 &18.07 &19.30 &18.30 &47.00 &77.92 &31.57 &6.50 &17.00 &42.52 &41.94 & 26.64 \\
&H2O &20.33 &30.43 &34.22 &13.61 &13.37 &7.81 &20.72 &21.66 &18.44 &40.00 &86.94 &42.17 &7.00 &70.50 &53.45 &53.76 & 33.40 \\
&SnapKV &22.26 &31.62 &38.95 &16.05 &17.71 &7.66 &18.91 &21.41 &18.21 &46.00 &87.61 &42.01 &6.50 &63.50 &54.87 &53.03 & 34.14 \\
&PyramidKV &20.50 &31.70 &39.95 &18.54 &18.54 &8.85 &19.24 &20.47 &18.18 &60.00 &87.98 &39.71 &7.00 &49.00 &48.77 &47.91 & 33.52 \\
&DynamicKV &22.77 &35.57 &42.62 &14.80 &16.35 &8.31 &21.41 &21.97 &19.56 &58.00 &88.18 &40.93 &6.50 &70.00 &53.58 &52.50 & 35.82 \\
&CompilerKV &24.12 &38.93 &43.81 &14.85 &16.45 &8.40 &34.62 &22.47 &23.74 &70.00 &88.38 &44.05 &6.50 &72.50 &56.44 &57.79 & \textbf{38.94} \\
\midrule

\multirow{6}{*}{\rotatebox{90}{\textbf{\makecell{InternLM-2.5-7B\\-Chat-1M}}}}
 &FullKV &22.42 &27.61 &39.98 &40.92 &33.48 &26.68 &33.01 &25.18 &26.28 &72.50 &86.76 &39.76 &2.91 &100.00 &55.86 &57.95 & 43.21 \\
% InternLM-2.5-7B-Chat-1M 128
% InternLM-2.5-7B-Chat-1M 512
\cmidrule(lr){2-19}
&StreamingLLM &17.91 &13.02 &24.31 &24.27 &16.01 &11.29 &17.29 &20.62 &18.06 &48.5 &67.53 &21.93 &0.82 &87.39 &43.45 &42.79 & 29.70 \\
&H2O &16.16 &17.71 &27.94 &26.83 &17.83 &17.81 &13.99 &22.59 &16.9 &39.50 &81.87 &32.15 &1.32 &96.50 &48.30 &47.27 & 32.79 \\
&SnapKV &19.65 &17.44 &35.29 &27.36 &18.58 &19.79 &12.76 &22.42 &16.31 &48.00 &80.23 &31.35 &0.95 &95.00 &49.47 &48.22 & 33.93 \\
&PyramidKV &18.80 &17.35 &33.48 &31.16 &20.05 &19.02 &14.65 &22.02 &17.40 &69.50 &80.87 &32.02 &1.23 &95.00 &47.13 &44.73 & 35.28 \\
&DynamicKV &17.93 &19.89 &34.15 &31.50 &19.03 &20.60 &15.14 &22.41 &18.15 &70.00 &83.09 &32.44 &0.86 &95.50 &49.33 &47.16 & 36.07 \\
&CompilerKV &21.36 &25.32 &36.61 &37.17 &29.75 &24.95 &25.5 &22.62 &18.46 &70.50 &83.43 &36.76 &1.92 &98.00 &54.28 &55.42 & \textbf{40.13} \\
\bottomrule
\end{tabular}
}
\caption{\textbf{LongBench results at the extreme 128-token budget.} CompilerKV remains the strongest compressed method on all four backbones, but the severe information bottleneck keeps it below the FullKV reference, as expected. Bold marks the best compressed average; FullKV is shown only as the lossless reference.}
\label{table:longbench_128}
\vspace{-3mm}
\end{table*}

\section{Algorithm}

% This appendix algorithm is flushed before the NeurIPS checklist by the \clearpage below.
\begin{algorithm}[tb]
\caption{\ours: Prefill-only KV Compression (operator form)}
\label{alg:compilerkv}
\begin{algorithmic}[1]
\STATE \textbf{Input:} prompt $x_{1:T}$; layers $l\in[1,L]$; heads $h\in[1,H]$; per-layer budgets $\{B_l\}$;
observation window $\Omega=\{T-w_{\mathrm{obs}}+1,\dots,T\}$;
risk LUT $\mathbf{T}_{\mathrm{gate}}$; head table $\mathbf{W}_{\mathrm{head}}$
\STATE \textbf{Output:} compressed KV cache $\{(\tilde{K}^{(l)},\tilde{V}^{(l)})\}_{l=1}^{L}$

\vspace{0.35em}
\STATE \textbf{(A) Prefill-time risk coordinates (streaming reductions)}
\STATE $\{A^{(l,h)},\,K^{(l,h)},\,V^{(l,h)}\}\leftarrow \textsc{Prefill}(x_{1:T})$
\hfill $A^{(l,h)}\!\in\!\mathbb{R}^{T\times T},\ K^{(l,h)},V^{(l,h)}\!\in\!\mathbb{R}^{T\times d}$
\STATE $\bar{A}\leftarrow \textsc{Mean}_{l,h}\!\left(A^{(l,h)}\right)$ \hfill (\textit{layer/head mean})
\STATE $\alpha \leftarrow (T/|\Omega|)\sum_{j\in\Omega} \bar{A}_{j,:}\in\mathbb{R}^{T}$ \hfill (\textit{scale-normalized window mass})
\STATE $\bar{A}' \leftarrow \alpha / \lVert \alpha\rVert_1$ \hfill (\textit{normalize to a distribution})
\STATE $\mathcal{R}_{\mathrm{struct}}\leftarrow \mathsf{H}(\bar{A}')\;=\;-\sum_{t=1}^T \bar{A}'_t\log \bar{A}'_t$
\STATE $\mathcal{R}_{\mathrm{sem}}\leftarrow \exp\!\Big(-\frac{1}{|\Omega|}\sum_{j\in\Omega}\log P(x_j\mid x_{<j})\Big)$
\STATE $(b_{\mathrm{ent}},b_{\mathrm{ppl}})\leftarrow \textsc{Discretize}(\mathcal{R}_{\mathrm{struct}},\mathcal{R}_{\mathrm{sem}})$

\vspace{0.35em}
\STATE \textbf{(B) Stage 1: Stabilized token utility (vectorized)}
\FOR{$l=1$ \textbf{to} $L$}
  \FOR{$h=1$ \textbf{to} $H$}
    \STATE $r^{(l,h)} \leftarrow \lVert V^{(l,h)}\rVert_{2,\mathrm{row}}\in\mathbb{R}^{T}$ \hfill (\textit{row-wise $\ell_2$ norm})
    \STATE $\mu^{(l,h)} \leftarrow \textsc{Mean}_t\!\left(r^{(l,h)}\right)$
    \STATE $\rho^{(l,h)} \leftarrow r^{(l,h)} / (\mu^{(l,h)}+\epsilon) \in\mathbb{R}^{T}$
    \STATE $u^{(l,h)} \leftarrow \alpha \odot \rho^{(l,h)} \in\mathbb{R}^{T}$ \hfill (\textit{elementwise})
  \ENDFOR
\ENDFOR

\vspace{0.35em}
\STATE \textbf{(C) Stage 2: Head-aware importance injection (weighted max-pool)}
\FOR{$l=1$ \textbf{to} $L$}
  \STATE $\hat{u}^{(l)} \leftarrow \textsc{MaxPool}_{h}\!\Big(u^{(l,h)}\odot \mathbf{W}_{\mathrm{head}}[l,h]\Big)\in\mathbb{R}^{T}$
  \hfill (\textit{tokenwise over heads})
\ENDFOR

\vspace{0.35em}
\STATE \textbf{(D) Stage 3: Risk-adaptive gating + budget correction}
\FOR{$l=1$ \textbf{to} $L$}
    \STATE $\tau^{(l)} \leftarrow \mathbf{T}_{\mathrm{gate}}[l,b_{\mathrm{ent}},b_{\mathrm{ppl}},B_l]$
    \STATE $m^{(l)} \leftarrow \mathbb{I}\!\left[\hat{u}^{(l)} \ge \tau^{(l)}\right]\in\{0,1\}^{T}$ \hfill (\textit{candidate mask})
    \STATE $\mathcal{S}^{(l)} \leftarrow \textsc{Select}\big(\hat{u}^{(l)},\,m^{(l)},\,B_l\big)$
    \STATE $(\tilde{K}^{(l)},\tilde{V}^{(l)}) \leftarrow \textsc{Gather}_{\mathrm{tok}}\!\left(K^{(l)},V^{(l)};\mathcal{S}^{(l)}\right)$
\ENDFOR
\STATE \textbf{return} $\{(\tilde{K}^{(l)},\tilde{V}^{(l)})\}_{l=1}^{L}$

\vspace{0.25em}
\STATE \textit{// \textsc{Select}: if $\|m^{(l)}\|_0\le B_l$, keep all candidates; else return Top-$B_l$ indices under score $\hat{u}^{(l)}$.}
\STATE \textit{// \textsc{Gather}$_{\mathrm{tok}}$ gathers along token dimension; $\mathcal{S}^{(l)}$ is shared across heads within layer $l$.}
\end{algorithmic}
\end{algorithm}

% For Stage 2, we use a head scaling range $W_{\text{head}}[l,h]\in[0.8,1.5]$.
% We intentionally restrict the minimum weight to $0.8$ so that no head is fully suppressed, avoiding brittle behavior when a head becomes unexpectedly important for a rare prompt.
% The maximum weight $1.5$ is sufficient to create clear head heterogeneity without causing extreme re-ranking that would override the risk-gating stage.
% In practice, the \emph{effective} influence of Stage 2 is further bounded because it only rescales the shared utility $u_t$ and the final selection remains budget-feasible after Stage 3.
% We found that moderate variations within these bounded ranges produce consistent performance ordering and do not qualitatively change the learned policies, suggesting the method is not driven by fragile hyperparameter tuning.

%%%%%%%%%%%%%%%%%%%%%%%%%%%%%%%%%%%%%%%%%%%%%%%%%%%%%%%%%%%%%%%%%%%%%%%%%%%%%%%
%%%%%%%%%%%%%%%%%%%%%%%%%%%%%%%%%%%%%%%%%%%%%%%%%%%%%%%%%%%%%%%%%%%%%%%%%%%%%%%

% NeurIPS checklist omitted from the arXiv/preprint source.

\end{document}